\documentclass{article}

\usepackage[preprint]{neurips_2026}

\usepackage[utf8]{inputenc} 
\usepackage[T1]{fontenc}    
\usepackage{hyperref}       
\usepackage{url}            
\usepackage{booktabs}       
\usepackage{amsfonts}       
\usepackage{nicefrac}       
\usepackage{microtype}      
\usepackage{xcolor}         
\usepackage{amsmath}
\usepackage{graphicx}
\usepackage{subcaption}
\newcommand{\MyTitle}{Casper3D}
\usepackage{amsthm}
\usepackage{booktabs}
\usepackage{multirow}
\usepackage[table]{xcolor}
\usepackage{graphicx}
\usepackage{amssymb}
\theoremstyle{definition}
\newtheorem{definition}{Definition}
\newtheorem{proposition}{Proposition}
\newtheorem{lemma}{Lemma}
\usepackage{comment}
\usepackage{pifont}
\newcommand{\xmark}{\ding{55}}
\usepackage{booktabs}
\usepackage{multirow}
\usepackage{makecell}
\usepackage[table]{xcolor}
\usepackage{pifont}
\usepackage{pgfplots}
\usepackage{tikz}
\pgfplotsset{compat=1.18}

\title{Lightweight 3D Feature Pretraining by Bayesian Inversion of 2D Foundation Models}

\author{
Marwane Hariat$^{1}$ \quad Gianni Franchi$^{2}$ \quad David Filliat$^{2}$ \quad Antoine Manzanera$^{1}$\thanks{Corresponding author.}\\[2mm]
$^{1}$U2IS, ENSTA -- Institut Polytechnique de Paris, Palaiseau, France\\
$^{2}$Pôle Recherche, Agence Ministérielle pour l'IA de Défense, Palaiseau, France\\
\texttt{\{marwane.hariat, antoine.manzanera, gianni.franchi\}@ensta.fr}\\
\texttt{\{david.filliat\}@polytechnique.edu}
}

\begin{document}

\maketitle

\begin{abstract}
We present Casper3D, a lightweight probabilistic framework for converting noisy multi-view 2D foundation-model embeddings into a latent 3D semantic representation. We model view-level semantic features as noisy observations of an underlying 3D semantic state and infer this state with a set-based variational model that incorporates relative pose during multi-view reasoning. Casper3D is trained by predicting held-out semantic observations from novel viewpoints, while remaining aligned with visual and text semantic spaces for open-vocabulary 3D understanding. The framework is backbone-agnostic and applies to both language-aligned and self-supervised embeddings. Experiments show that Casper3D produces more stable 3D semantics than simple multi-view pooling, especially in ambiguous and noisy settings.
\end{abstract}
    
\section{Introduction}

A central goal of perception is to recover stable properties of the physical world from partial and view-dependent measurements. Images are such measurements: they do not directly reveal the underlying three-dimensional scene, but rather record its projection under a particular camera pose, illumination, occlusion pattern, and surrounding context. Modern pretrained vision and vision-language models, such as DINO~\cite{caron2021emerging,oquab2023dinov2,simeoni2025dinov3}, CLIP~\cite{radford2021learning,zhou2022extract,luddecke2022image}, and masked autoencoders~\cite{he2022masked}, transform these image measurements into powerful semantic feature spaces. These models have become widely used as backbones for 2D tasks including classification, segmentation, detection, and depth estimation~\cite{amir2021deep,hamilton2022unsupervised,gu2021open}. Yet their representations remain fundamentally attached to image coordinates.

In contrast, the physical world is organized in three dimensions. A point on a table, a chair, or a wall exists independently of the camera from which it is observed. Its semantic identity is not created by a particular view, even though each view may reveal 
a partial and noisy signature of it. This distinction is crucial for 3D scene understanding: we would like to attach feature embeddings not to pixels, but to physical points
\(
\{p_i \in \mathbb{R}^3\}_{i=1}^{N}.
\)
For each point \(p_i\), the desired embedding should represent an intrinsic property of that point in the scene: a view-invariant semantic state that persists across observations.

This motivates the following physical interpretation. Given a 3D point \(p_i\), there exists an unobserved latent representation
\(
z_i \in \mathbb{R}^d,
\)
which encodes the semantic content of the physical point. A 2D foundation model does not observe \(z_i\) directly. Instead, when the point is seen from a camera view \(j\), the model produces a feature observation
\(
x_{ij} \in \mathbb{R}^d
\)
that depends both on the latent state \(z_i\) and on the viewing condition \(T_j\). The image feature is therefore a measurement, not the object itself. It is affected by projection, scale, viewpoint, occlusion, local image context, and the inductive biases of the pretrained model.

From this perspective, lifting 2D features into 3D is not merely a geometric alignment problem. It is a Bayesian inverse problem: recover the hidden, view-independent semantic state of a physical point from multiple noisy, view-dependent measurements. Formally, given the set of observations
\(
\mathcal{C}_i = \{(x_{ij}, T_j)\}_{j=1}^{M_i},
\)
we seek to infer the latent state that explains them. The desired 3D representation is not a single observed 2D feature, nor an arbitrary average of observed features, but the latent cause underlying all of them.

Most existing approaches adopt a lift-and-aggregate strategy~\cite{zhang2022pointclip,peng2023openscene,song2025mv}. Using camera intrinsics, poses, and depth, each 3D point is projected into the images where it is visible. Features from a pretrained 2D model are sampled at the corresponding image locations and fused by average pooling, weighted averaging, voting, or similar deterministic rules:
\(
e_i = \sum_{j=1}^{M_i} w_{ij} x_{ij}.
\)
This approach is simple and often effective, but it implicitly assumes that the 3D feature can be estimated as a weighted mean of its 2D measurements.


In this paper, we propose \textbf{\MyTitle} (\textbf{Canonical Semantic Probabilistic Embeddings for 3D}) a probabilistic framework for lifting pretrained 2D foundation-model features into 3D. \MyTitle\ treats 3D feature lifting as a Bayesian inverse problem. For each 3D point \(p_i\), we assume a latent semantic variable \(z_i\), and model each 2D feature observation as noisy observations generated by a view-dependent measurement process:
\(
x_{ij} \sim p_\theta(x \mid z_i, T_j).
\)
The main objective is then to recover the posterior distribution
\(
p(z_i \mid \mathcal{C}_i),
\)
which represents what can be inferred about the intrinsic 3D semantic state from all available views. This posterior formulation naturally distinguishes between the physical representation of the point and the observations induced by particular cameras. It also provides a principled way to represent uncertainty: points seen from many consistent views should yield concentrated posteriors, whereas points seen from few or conflicting views should remain uncertain.

Since exact posterior is intractable, \MyTitle\ introduces a lightweight view-conditioned variational model. The encoder receives an unordered set of 2D feature measurements and viewing conditions,
\(
\mathcal{C}_i = \{(x_{ij}, T_j)\}_{j=1}^{M_i},
\)
and produces an amortized posterior approximation
\(
q_\phi(z_i \mid \mathcal{C}_i).
\)
The decoder is conditioned on both the inferred latent state and a target viewing condition, and is trained to predict held-out 2D feature observations. In this way, the model is forced to separate the intrinsic component of the point representation, encoded in \(z_i\), from the extrinsic component induced by viewpoint, encoded through \(T_j\). We use held-out feature prediction as a self-supervised objective for learning a view-stable latent 3D semantic representation.


Once inferred, the resulting 3D feature field can be used for multiple downstream tasks. We focus on open-vocabulary 3D semantic segmentation as a primary application.

\paragraph{Contributions.} To summarize, our contributions are as follows:
\begin{itemize}
    \item We formulate 2D-to-3D foundation-model feature lifting as a Bayesian inverse problem, where each 3D point has an unobserved view-invariant semantic state and each 2D feature is a noisy, view-dependent observation of that state.

    \item We introduce \textbf{\MyTitle}, a probabilistic latent-variable framework for learning 3D feature fields from multi-view 2D pretrained features. \MyTitle\ uses a lightweight view-conditioned variational model to infer posterior distributions over latent 3D point features from unordered, variable-size sets of 2D observations, and is trained through held-out view prediction to separate view-stable semantic content from view-dependent measurement effects.

    \item We demonstrate \MyTitle\ on open-vocabulary 3D semantic segmentation, showing improved robustness and accuracy over aggregation-based feature lifting methods.
\end{itemize}

\section{Related Work}
\label{sec:related_work}

\textbf{Multi-view 2D feature lifting.}
A common strategy for transferring 2D semantics to 3D is to associate image observations with geometry and fuse them across views. SemanticFusion~\cite{mccormac2017semanticfusion} follows this idea using SLAM~\cite{whelan2016elasticfusion}, but remains closed-set. Recent open-vocabulary methods replace supervised predictors with pretrained 2D models: ConceptFusion~\cite{jatavallabhula2023conceptfusion} lifts SAM-based~\cite{kirillov2023segment} and global image features into 3D; FeatureRealisticFusion~\cite{mazur2023feature} learns a neural field with an iMAP-like backend~\cite{sucar2021imap}; OpenMask3D~\cite{takmaz2023openmask3d}, OpenIns3D~\cite{huang2024openins3d}, and OV3D~\cite{zhou2025ov3d} lift SAM-guided masks~\cite{kirillov2023segment} to 3D instances. Multi-view recognition methods such as PointCLIP~\cite{zhang2022pointclip} and MV-CLIP~\cite{song2025mv} aggregate CLIP features or logits across rendered views. For point-level segmentation, OpenScene~\cite{peng2023openscene} and CLIP-FO3D~\cite{zhang2023clip} lift and distill CLIP-aligned features, while Semantic Abstraction~\cite{ha2022semantic}, later text-supervised methods~\cite{jiang2024open,zhang2025pgov3d}, RegionPLC~\cite{yang2024regionplc}, PLA/CLIP2Scene-style methods~\cite{ding2023pla,chen2023clip2scene}, and DITR~\cite{knaebel2025dino} use 2D foundation-model supervision through text, region, or feature correspondences. These works show the value of 2D foundation models for 3D perception, but typically obtain 3D features by selecting, averaging, distilling, or fusing view-dependent observations. We instead infer a posterior over an unobserved latent 3D semantic state.

\textbf{Bayesian inverse problems.}
Our formulation follows the spirit of Bayesian inverse problems, where an unknown latent quantity is inferred from noisy measurements rather than directly aggregated. For example, dark-matter mass mapping infers an unobserved matter field from observational tracers~\cite{royo2026mapping}. In our case, the desired 3D semantic feature is also unobserved and only measured through view-dependent 2D foundation-model features. Unlike physical inverse problems, where the latent field and measurement process may be physically defined, and recent methods use diffusion models~\cite{ho2020denoising} with Langevin-based or improved posterior samplers~\cite{zhang2025improving,wang2026sample}, our latent state is an abstract semantic variable with no ground-truth target. We therefore learn a lightweight amortized posterior \(q_\phi(z_i \mid \mathcal{C}_i)\).

\textbf{Set-based variational models.}
Our encoder must handle unordered sets with a variable number of views. Deep Sets~\cite{zaheer2017deep}, Set Transformers~\cite{lee2019set}, and SetVAE~\cite{kim2021setvae} provide tools for permutation-invariant and variational set modeling. Unlike standard set generation, and unlike order-dependent ViT positional encodings~\cite{dosovitskiy2020image}, our viewing condition is part of the physical measurement process. We therefore combine set-based posterior inference with view-conditioned decoding.

\textbf{Held-out feature prediction.}
Predicting held-out features from target views is related to view-conditioned novel-view synthesis~\cite{liu2023zero}. However, we do not render RGB images or geometry; we use held-out foundation-feature prediction as a self-supervised signal for learning a view-stable latent 3D representation.
\section{Method}
\label{sec:method}
\setlength{\abovedisplayskip}{4pt}
\setlength{\belowdisplayskip}{4pt}
\setlength{\abovedisplayshortskip}{3pt}
\setlength{\belowdisplayshortskip}{3pt}
\setlength{\textfloatsep}{8pt}
\setlength{\floatsep}{6pt}
\setlength{\intextsep}{6pt}

\begin{figure}[t]
  \centering
  \includegraphics[width=\linewidth, height=6.5cm]{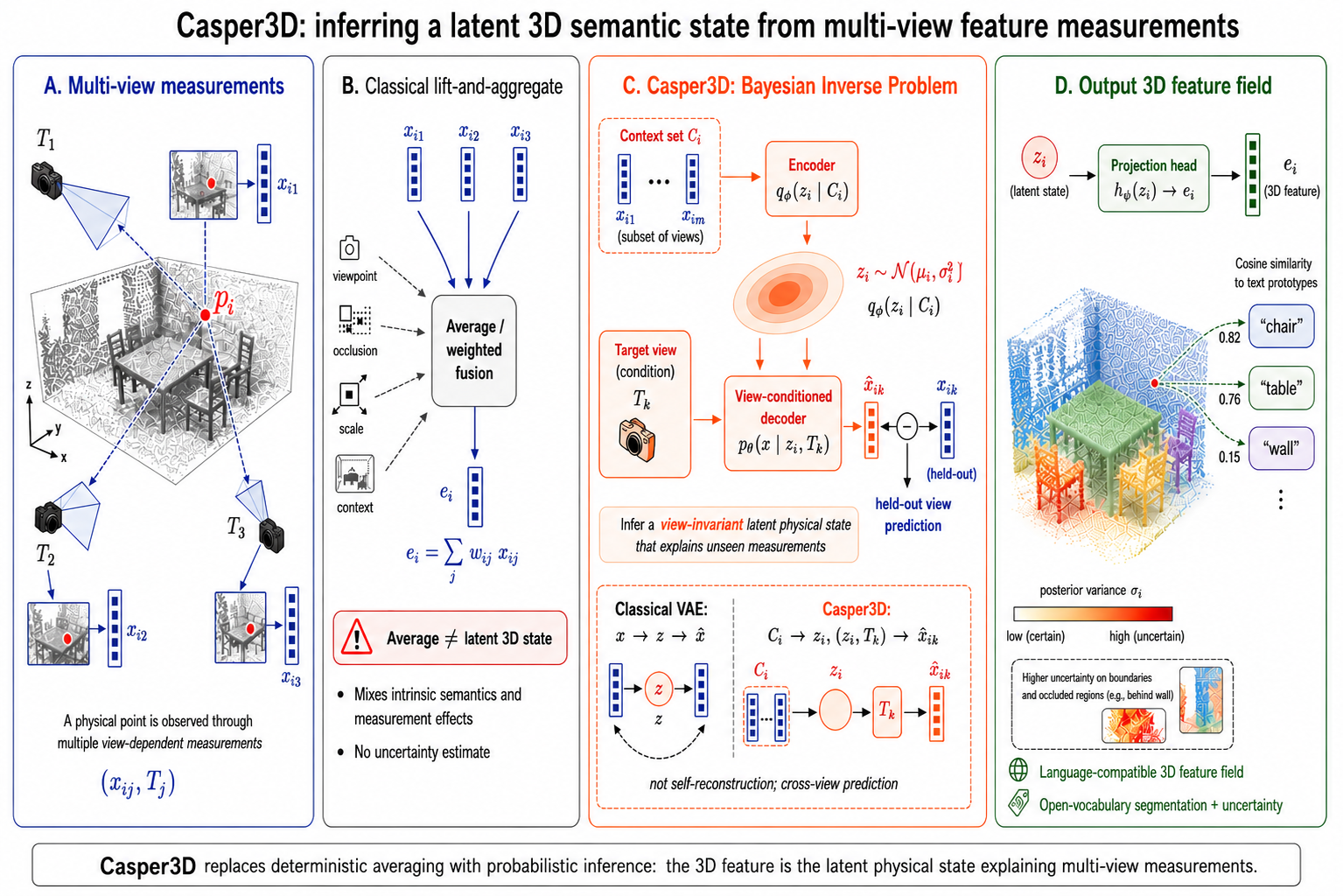}
  \caption{\textbf{Overview of \MyTitle}}
  \label{fig:full_pipeline}
\end{figure}

\textbf{\MyTitle} (\textbf{Canonical Semantic Probabilistic Embeddings for 3D}) is a probabilistic method for lifting pretrained 2D foundation-model features into 3D. Its central idea is that a 3D point has an underlying semantic state, but this state is never directly observed. We only observe its 2D measurements through different cameras.

This creates an important difficulty: there is no ground-truth latent 3D feature to supervise. We do not know what the ``correct'' internal 3D representation of CLIP, DINO, or any other vision-language model should be. We only observe view-dependent 2D features sampled from images. Therefore, learning a 3D feature field cannot be posed as standard supervised regression. Instead, we formulate it as probabilistic inference: the model must infer a distribution over possible latent 3D states that are compatible with the available measurements.

This is the main departure from deterministic feature lifting. Existing methods often collapse all observed 2D features into one embedding by averaging or weighted fusion. \MyTitle\ instead treats these observations as noisy physical measurements and asks: \emph{which latent 3D state could have generated them?}

\subsection{A Latent Physical State, Not an Averaged Feature}



As mentioned previously, \MyTitle\ therefore introduces a latent variable \(z_i\) for each 3D point \(p_i \in \mathbb{R}^3\). This latent variable is not a supervised target. It is an inferred physical semantic state: the hidden representation that explains how the point appears across views. Since this state is unobserved and may be ambiguous, we infer a posterior distribution rather than a single deterministic embedding:
\begin{equation}
q_\phi(z_i \mid \mathcal{C}_i)
=
\mathcal{N}
\left(
z_i;
\mu_i,
\operatorname{diag}(\sigma_i^2)
\right).
 \end{equation}

The mean \(\mu_i  \in \mathbb{R}^d\) represents the most likely latent state given the available views. The scale \(\sigma_i  \in \mathbb{R}^d\) represents epistemic uncertainty about this state. When observations are dense and consistent, the posterior can become sharp. When observations are sparse, noisy, or contradictory, the posterior remains uncertain. This uncertainty is not an auxiliary output; it is a direct consequence of the fact that the true latent 3D feature is never observed.

\subsection{Learning Without a Ground-Truth 3D Latent Space}
\begin{figure*}[t]
\centering

\newcommand{\topfigheight}{4.cm}
\newcommand{\bottomfigheight}{4.0cm}

\begin{subfigure}[b]{0.32\textwidth}
    \centering
    \includegraphics[
        width=\linewidth,
        height=\topfigheight,
        keepaspectratio
    ]{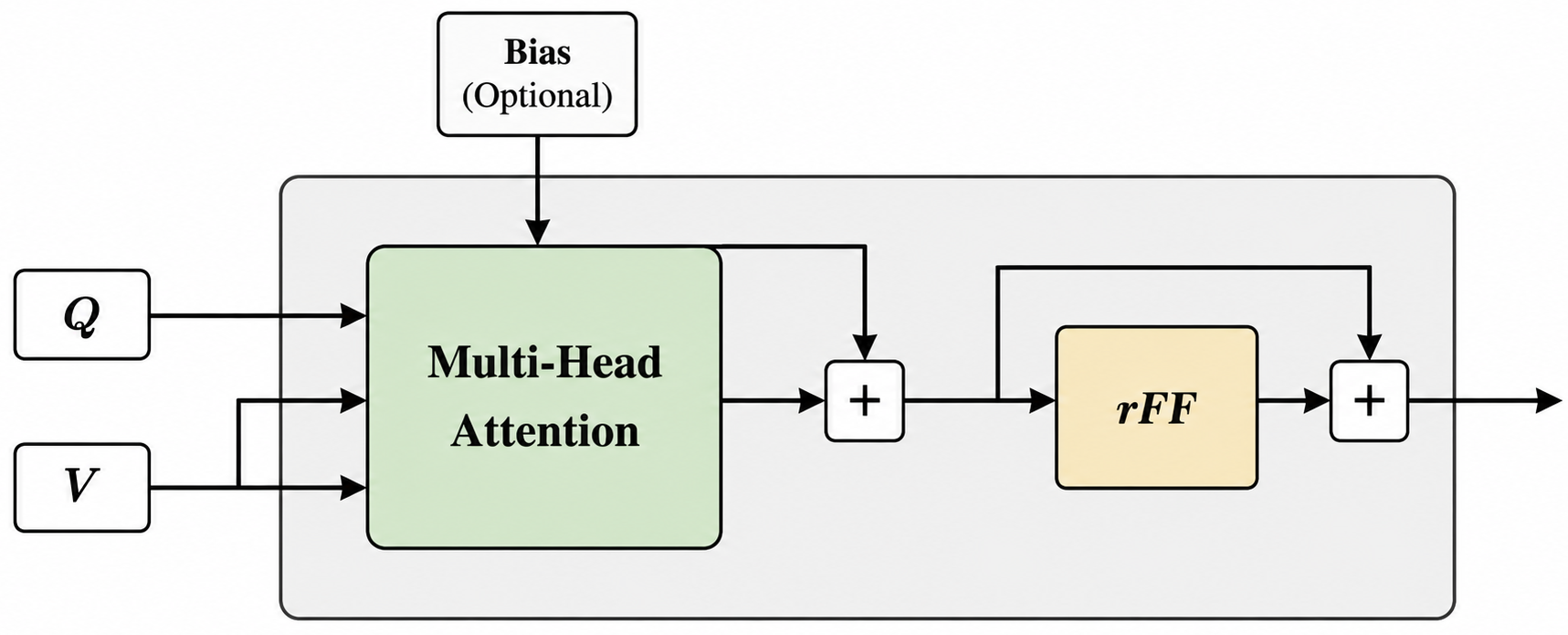}
    \caption{Multihead Attention Block (MAB).}
    \label{fig:mab}
\end{subfigure}
\hfill
\begin{subfigure}[b]{0.32\textwidth}
    \centering
    \includegraphics[
        width=\linewidth,
        height=\topfigheight,
        keepaspectratio
    ]{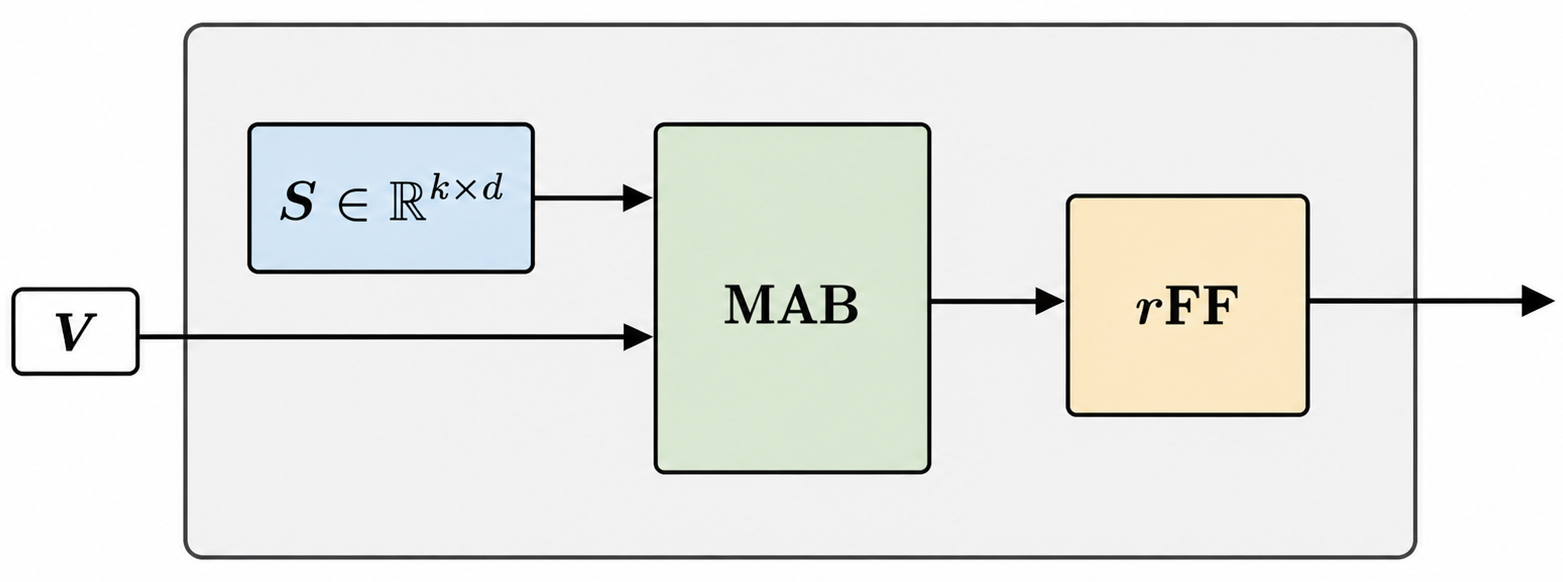}
    \caption{Pooling by Multihead Attention (PMA). $S$: learned seed vectors.}
    \label{fig:pma}
\end{subfigure}
\hfill
\begin{subfigure}[b]{0.32\textwidth}
    \centering
    \includegraphics[
        width=\linewidth,
        height=\topfigheight,
        keepaspectratio
    ]{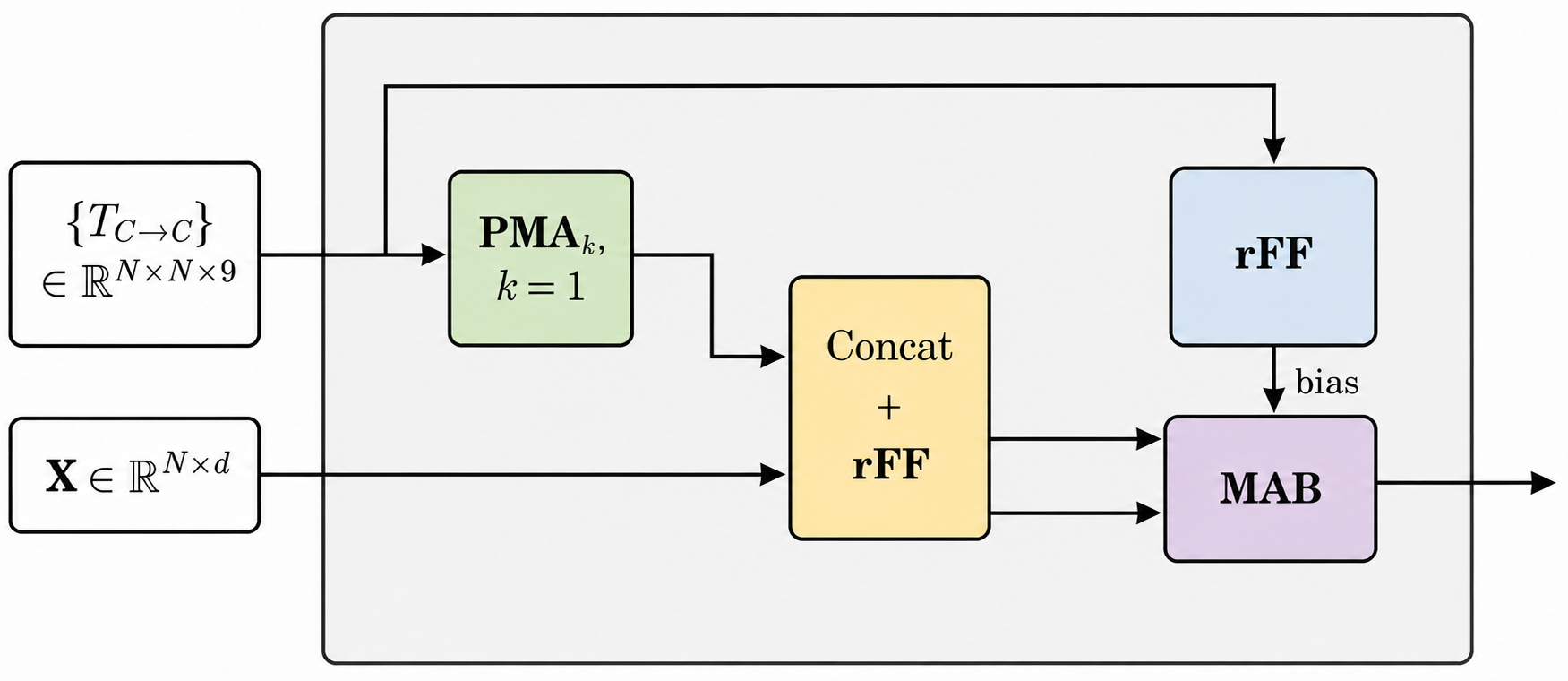}
    \caption{Our new Pose-aware Attention Block (PAB)}
    \label{fig:pab}
\end{subfigure}

\vspace{0.cm}

\begin{subfigure}[b]{0.49\textwidth}
    \centering
    \includegraphics[
        width=\linewidth,
        height=\bottomfigheight,
        keepaspectratio
    ]{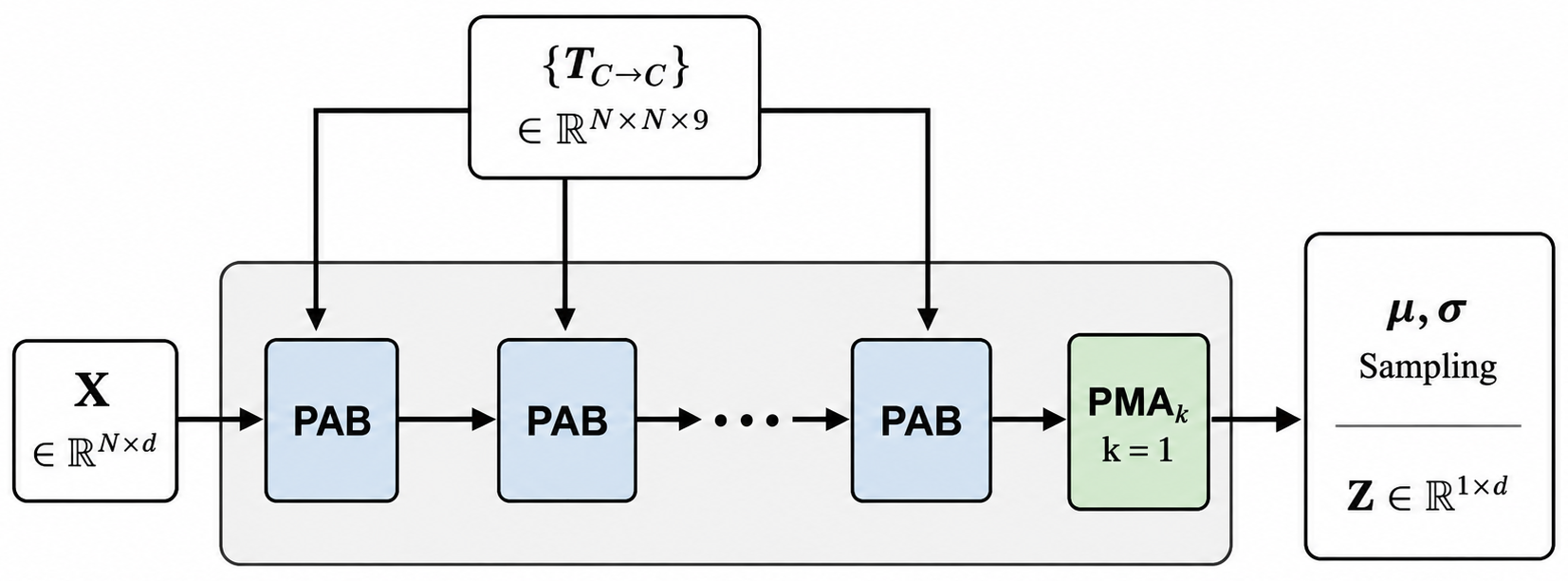}
    \caption{Our Encoder}
    \label{fig:encoder}
\end{subfigure}
\hfill
\begin{subfigure}[b]{0.49\textwidth}
    \centering
    \includegraphics[
        width=\linewidth,
        height=\bottomfigheight,
        keepaspectratio
    ]{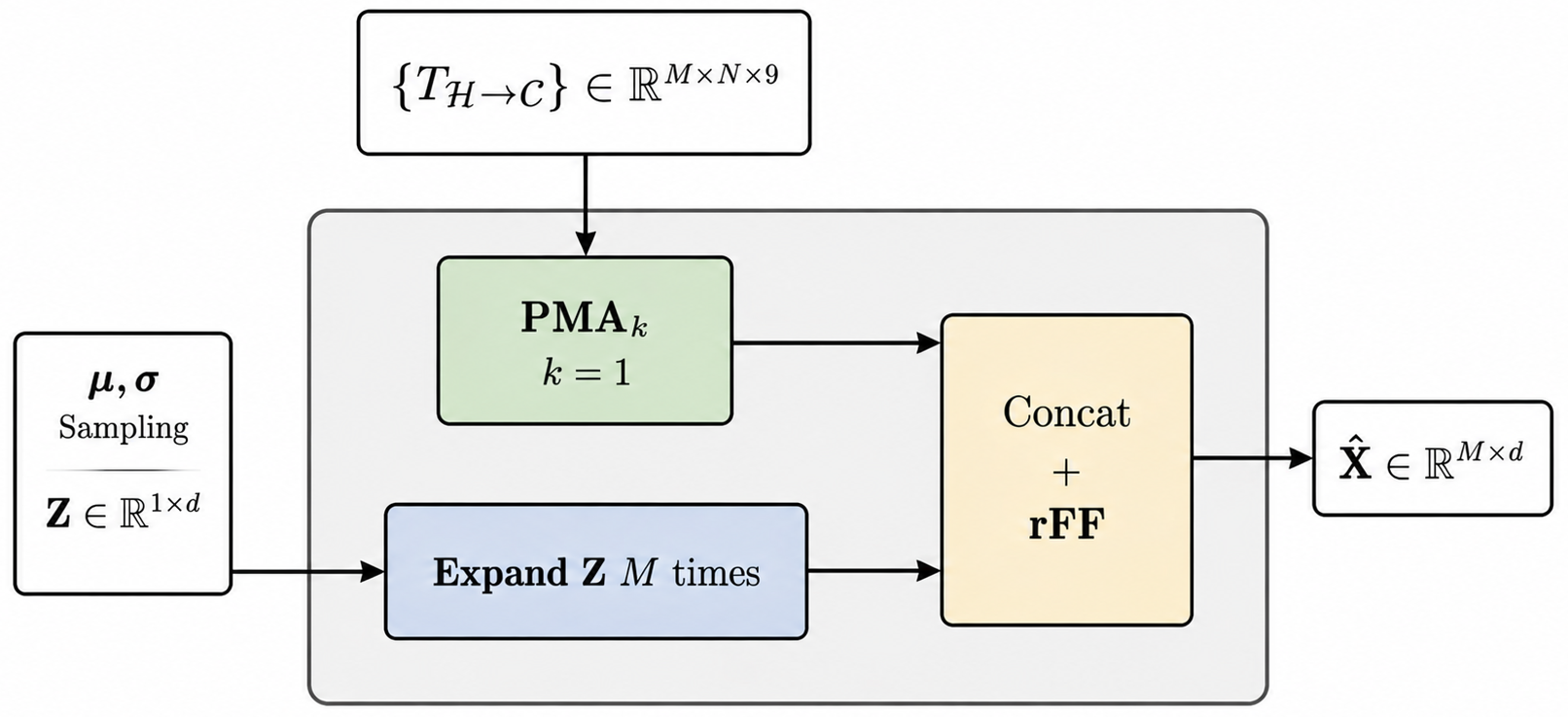}
    \caption{Our Decoder}
    \label{fig:decoder}
\end{subfigure}

\caption{Overview of our proposed architecture. More details in Appendix.}
\label{fig:architecture-blocks}
\end{figure*}

A key challenge is that we cannot supervise \(z_i\) directly. There is no dataset that provides the true CLIP-like or DINO-like feature of a physical 3D point independent of all views. The latent space must therefore be identified indirectly. \MyTitle\ learns this space through a simple physical principle: a correct latent state should predict measurements of the same point from views that were not used to infer it.

During training, we split the measurements of a point into a context set \(\mathcal{C}_i\) and a held-out set \(\mathcal{H}_i\). The context set \(\mathcal{C}_i\) is used to infer the latent state \(z_i\) through the posterior \(q_\phi(z_i \mid \mathcal{C}_i)\). The held-out set \(\mathcal{H}_i\) is used to test whether this inferred state can explain unseen measurements \((x_{ik},T_k)\) with \(T_k\in SE(3)\) the camera-to-world pose of that view. This provides a self-supervised training signal without requiring ground-truth 3D features \(z_i^\star\).
The encoder estimates \(q_\phi(z_i \mid \mathcal{C}_i)\).
The decoder receives a sample from this posterior and a target viewing condition, and predicts the feature that should be observed from that view: \(\hat{x}_{ik}=\mathrm{Decoder}_\theta(z_i,T_k)\).

Thus, \MyTitle\ is trained by cross-view prediction rather than input reconstruction. The model succeeds only if the latent state inferred from some views contains information that generalizes to other views of the same physical point.

We model each held-out 2D feature as a noisy, view-conditioned measurement of the latent 3D state:
\[
x_{ik}
\sim
p_\theta
\left(
x \mid z_i, T_k
\right).
\]
Thus, different views of the same point may induce different observed features due to viewpoint, scale, visibility, occlusion, and feature-extractor uncertainty.

\subsection{Training Objective}

For clarity, we write the held-out set as:
\(
\mathcal{H}_i
=
\left\{
\left(x_{ik},T_k\right)
\right\}_{k=1}^{N_i},
\qquad
N_i = |\mathcal{H}_i|.
\)
We also separate the held-out features and poses:
\(
\mathcal{X}_i^{\mathcal{H}}
=
\left\{
x_{ik}
\right\}_{k=1}^{N_i},
\qquad
\mathcal{T}_i^{\mathcal{H}}
=
\left\{
T_k
\right\}_{k=1}^{N_i}.
\)
Our goal is to explain held-out observations from the context observations and the held-out poses.
Because the latent state is unknown, we marginalize over possible latent explanations:
\begin{equation}
\log
p_\theta
\left(
\mathcal{X}_i^{\mathcal{H}}
\mid
\mathcal{C}_i,
\mathcal{T}_i^{\mathcal{H}}
\right)
=
\log
\int
p_\theta
\left(
\mathcal{X}_i^{\mathcal{H}}
\mid
z_i,
\mathcal{C}_i,
\mathcal{T}_i^{\mathcal{H}}
\right)
p(z_i)
\,\mathrm{d}z_i .
\end{equation}

In practice, we optimize the negative ELBO:
\begin{equation}
\mathcal{L}_{\mathrm{ELBO}}(i)
=
\mathcal{L}_{\mathrm{heldout}}(i)
+
\beta\,
\mathrm{KL}
\left[
q_\phi
\left(
z_i \mid \mathcal{C}_i
\right)
\,\middle\|\,
p(z_i)
\right].
\end{equation}

The held-out term is:
\begin{equation}
\mathcal{L}_{\mathrm{heldout}}(i)
=
-
\mathbb{E}_{z_i \sim q_\phi(\cdot \mid \mathcal{C}_i)}
\left[
\sum_{k=1}^{N_i}
\log
p_\theta
\left(
x_{ik}
\mid
z_i,
\mathcal{C}_i,
T_k
\right)
\right].
\end{equation}

In our implementation, we use a cosine loss between the predicted held-out feature \(\hat{x}_{ik}\) and the observed feature \(x_{ik}\):
\begin{equation}
\mathcal{L}_{\mathrm{heldout}}(i)
=
\frac{1}{N_i}
\sum_{k=1}^{N_i}
\left[
1
-
\cos
\left(
\hat{x}_{ik},
x_{ik}
\right)
\right].
\end{equation}

\subsection{Difference from a Classical VAE}

Although \MyTitle\ uses variational inference, it is not a classical VAE applied to 2D features. In a standard VAE, the encoder observes an input and the decoder reconstructs that same input:
\(
x \rightarrow z \rightarrow \hat{x}.
\)
The latent variable is therefore encouraged to preserve the information needed to reproduce the observed sample.\MyTitle\ changes the role of the latent variable. The encoder receives a set of context measurements, but the decoder must predict different measurements from held-out views:
\begin{equation}
\mathcal{C}_i \rightarrow z_i,
\qquad
(z_i,T_k) \rightarrow \hat{x}_{ik},
\qquad
(x_{ik},T_k)\notin \mathcal{C}_i.
\end{equation}

This distinction is central. If the model reconstructed the context features, it could simply encode view-specific artifacts: the particular camera pose, occlusion pattern, scale, or local image context. By predicting held-out views, \MyTitle\ forces the latent variable to capture what is stable across the measurement process. The target view \(T_k\) is given to the decoder, so view-dependent variation can be explained there rather than stored in the latent state.

In this sense, \MyTitle\ turns the VAE objective into a physical posterior-predictive problem. The latent variable is useful only if it explains measurements of the same 3D point that were not used to infer it.

Our network architecture differs from typical SetTransformers \cite{lee2019set, kim2021setvae} because we make the attention blocks camera-pose-aware, as shown in Figure ~\ref{fig:architecture-blocks}.
Let us define \(N\) as the cardinality of the context set \(C\) (we omit the index \(i \) for simplicity). Instead of considering camera to world poses \(T_{j}\), we consider relative poses. Let us define
\(
 T_{l \rightarrow k}
=
T_{k}^{-1}T_{l} 
\) Coordinate of view frame \(l\) expressed in coordinate of view frame \(k\). We can then define the tensor of relative poses over context views:
\begin{equation}
T_{\mathcal{C}\rightarrow\mathcal{C}} = \{T_{l \rightarrow k}\}_{l, k} \in SE(3)^{N \times N}
\end{equation}
Hence, \(T_{\mathcal{C}\rightarrow\mathcal{C}} \in \mathbb{R}^{N \times N \times 12}\) because 
each
relative pose is a \(3 \times 4\) matrix. We inject this relative pose information both in the query and in the bias of the multi-head attention as explained in Figure~\ref{fig:architecture-blocks}.

Similarly, instead of considering the camera to world poses of held-out observations when decoding, we consider the relative poses of each held-out view with respect to each context view: \(T_{\mathcal{H}\rightarrow \mathcal{C}}\), we concatenate this pose information with the latent space \(Z\) to predict the held-out 2D feature as displayed in Figure~\ref{fig:architecture-blocks}.

\begin{definition}[Permutation Equivariance and Invariance]
Let $\mathfrak{S}_n$ be the symmetric group on $n$ elements, and let
$x=(x_1,\dots,x_n) \in \mathcal{X}^n$. For any $\pi \in \mathfrak{S}_n$,
define the permuted input as
$\pi \cdot x = (x_{\pi(1)},\dots,x_{\pi(n)})$.

A function $f:\mathcal{X}^n \to \mathcal{Y}^n$ is
\emph{permutation equivariant} if
\begin{equation}
    f(\pi \cdot x) = \pi \cdot f(x),
    \qquad \forall \pi \in \mathfrak{S}_n,\ \forall x \in \mathcal{X}^n.
\end{equation}
A function $g:\mathcal{X}^n \to \mathcal{Y}$ is
\emph{permutation invariant} if
\begin{equation}
    g(\pi \cdot x) = g(x),
    \qquad \forall \pi \in \mathfrak{S}_n,\ \forall x \in \mathcal{X}^n.
\end{equation}
\end{definition}
\begin{proposition}[Permutation Equivariance of the PAB Block]
Let $\mathrm{PAB}:\mathcal{X}^n \to \mathcal{Y}^n$ denote the proposed
Pose-aware Attention Block. Then, for any permutation $\pi \in \mathfrak{S}_n$ and any
input $X \in \mathcal{X}^n$, the PAB block is permutation equivariant:
\[
    \mathrm{PAB}(\pi \cdot X)
    =
    \pi \cdot \mathrm{PAB}(X).
\]
\end{proposition}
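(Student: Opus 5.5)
We have to fix a concrete form of the PAB block, because the paper describes it only through Figure~\ref{fig:pab}. We take its inputs to be the context features $X=(x_1,\dots,x_n)$ together with the relative-pose tensor $T_{\mathcal{C}\rightarrow\mathcal{C}}=\{T_{l\rightarrow k}\}_{l,k}$. Permuting the views acts on the features by $(\pi\cdot X)_k = x_{\pi(k)}$. Since $T_{l\rightarrow k}=T_k^{-1}T_l$ depends only on which views sit at positions $l$ and $k$, the same permutation acts on the pose tensor simultaneously on both indices: $(\pi\cdot T)_{l\rightarrow k}=T_{\pi(l)\rightarrow\pi(k)}$. This is the action we mean by $\pi\cdot X$ in the statement. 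The proof consists of checking that every operation in the block commutes with this joint action.

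\textbf{Step 1: the attention logits.} We write the block as a composition of three kinds of maps:
\begin{itemize}
\item row-wise maps applied independently to each view token (linear projections, LayerNorm, the feed-forward sublayer, residual additions);
\item pair-wise maps applied independently to each pose entry, namely the MLP embedding $T_{l\rightarrow k}$ into a query modulation and into a scalar attention bias $b_{kl}$;
\item pose-aware multi-head attention.
\end{itemize}
Row-wise maps are trivially equivariant, and pair-wise maps commute with the joint permutation on both indices. In each head, the pose-aware attention uses logits
\[
A_{kl}=\frac{\langle q_k(x_k,\{T_{l\rightarrow k}\}),\,\kappa_l(x_l)\rangle}{\sqrt{d}}+b_{kl}.
\]
Here we allow the query to depend on the pose pair, so the natural form is $q_{kl}=W_Q x_k+g(T_{l\rightarrow k})$. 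If instead the query is aggregated over $l$ by a symmetric pooling such as a sum or mean, that aggregation is itself permutation invariant. With either choice, the permuted input gives logits $A'_{kl}=A_{\pi(k)\pi(l)}$, that is, $A'=P A P^\top$ where $P$ is the permutation matrix of $\pi$.

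\textbf{Step 2: softmax and value aggregation.} A row-wise softmax commutes with the conjugation $A\mapsto PAP^\top$. It permutes the rows, and within each row it permutes the entries, and softmax is equivariant to permuting a row's entries. The values transform as $V'=PV$, so the head output becomes $\mathrm{softmax}(PAP^\top)PV=P\,\mathrm{softmax}(A)P^\top P V=P\,\mathrm{softmax}(A)V$, using $P^\top P=I$. Concatenating the heads, applying the output projection and then the residual and feed-forward sublayers preserves this relation. Composing these facts gives $\mathrm{PAB}(\pi\cdot X)=\pi\cdot\mathrm{PAB}(X)$.

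\textbf{Main obstacle.} The difficulty is not the algebra but making the pose injection precise. Two requirements must hold:
\begin{itemize}
\item the pose tensor must be permuted jointly with the features, not left fixed;
\item any reduction over the pair index $l$ inside the query construction must be symmetric, and no positional (index-dependent) parameters may appear.
\end{itemize}
We will state both as explicit hypotheses on the architecture read off Figure~\ref{fig:pab}. With the joint action in place, the relative-pose encoding is intrinsically equivariant, which is exactly why relative rather than absolute positional encodings are used.
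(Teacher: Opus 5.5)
Your proof is correct, but it handles the attention step differently from the paper.

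\textbf{The paper's route.} The paper fixes a concrete block, $\Phi(X)=\mathrm{rFF}(\operatorname{Concat}(X,\mathrm{PMA}(P(X))))$ and $\mathrm{PAB}(X)=\mathrm{MAB}_{P(X)}(\Phi(X),\Phi(X))$. It \emph{assumes} $P(\pi\cdot X)=\pi\cdot P(X)$ and then chains three lemmas:
\begin{itemize}
\item PMA invariance, so the pooled pose summary is unchanged by $\pi$ and is concatenated identically to every token;
\item value-invariance of MAB (Lemma~\ref{lem:mab-invariant});
\item query-equivariance of a biased MAB whose bias is permuted together with the queries (Lemma~\ref{lem:mab-biased-equivariant}).
\end{itemize}

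\textbf{Your route.} You derive the joint two-index action on the pose tensor from $T_{l\rightarrow k}=T_k^{-1}T_l$, rather than assuming it. You then settle the attention in one step via $\mathrm{softmax}(\Pi A\Pi^\top)\Pi V=\Pi\,\mathrm{softmax}(A)V$. Here I write $\Pi$ for your permutation matrix, to avoid a clash with the paper's $P(X)$.

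\textbf{What each buys.} Your approach gains something real, because the paper's split into a key-side step and a query-side step is delicate. Lemma~\ref{lem:mab-invariant} is about unbiased attention. Removing $\pi$ from the key--value argument while the bias $\pi\cdot P(X)$ is still permuted on its key index is not valid as written: in general $\mathrm{MAB}_{\Pi B\Pi^\top}(\Pi Q,\Pi V)\neq\mathrm{MAB}_{\Pi B\Pi^\top}(\Pi Q,V)$. The correct intermediate identity is $\mathrm{MAB}_{\Pi B\Pi^\top}(\Pi Q,\Pi V)=\mathrm{MAB}_{\Pi B}(\Pi Q,V)$, after which Lemma~\ref{lem:mab-biased-equivariant} applies to the row-only-permuted bias. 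Your conjugation argument treats rows and columns at once and never needs that split. The paper's route, in exchange, reuses the Set Transformer lemmas modularly for one specific architecture.

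\textbf{One small adjustment.} In the paper's block the pose summary is concatenated into $\Phi(X)$, which is both the query \emph{and} the key--value input. Your template should therefore let $\kappa_l$ and the values depend on $x_l$ together with a broadcast permutation-invariant vector, not on $x_l$ alone. This changes nothing in your argument, since such a vector commutes with row permutations.
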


\begin{proof}
The proof is provided in Appendix.
\end{proof}

\begin{proposition}[Permutation Invariance of the Encoder]
Let $\mathrm{Enc}:\mathcal{X}^n \to \mathcal{Y}$ denote the proposed
encoder. Then, for any permutation $\pi \in \mathfrak{S}_n$ and any
input $X \in \mathcal{X}^n$, the encoder is permutation invariant:
\[
    \mathrm{Enc}(\pi \cdot X)
    =
    \mathrm{Enc}(X).
\]
\end{proposition}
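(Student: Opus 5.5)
We model the encoder as the composition $\mathrm{Enc} = H \circ \mathrm{PMA} \circ \mathrm{PAB}_L \circ \cdots \circ \mathrm{PAB}_1 \circ E$, following Figure~\ref{fig:encoder}. Here $E$ is a per-view embedding applied independently to each pair $(x_j, T_j)$. The blocks $\mathrm{PAB}_\ell$ are the pose-aware attention blocks. $\mathrm{PMA}$ is a pooling-by-attention layer with learned seeds $S$ that do not depend on the input. $H$ is the head producing $(\mu, \sigma)$. The proof has three steps: show the pre-pooling part is permutation equivariant, show the pooling step turns equivariance into invariance, and observe that the head acts on an input that is already invariant.

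First, any map applied row-wise with shared weights is equivariant: $E(\pi\cdot X)_j = E(x_{\pi(j)}) = (\pi\cdot E(X))_j$. Compositions of equivariant maps are equivariant, since $f(g(\pi\cdot X)) = f(\pi\cdot g(X)) = \pi\cdot f(g(X))$. Combining these facts with the Proposition on the PAB block, the whole stack $F = \mathrm{PAB}_L\circ\cdots\circ E$ is equivariant.

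There is a point to check here. The relative-pose tensor $T_{\mathcal{C}\to\mathcal{C}}$ is not a per-element input; it transforms as $(T_{\pi(l)\to\pi(k)})_{l,k}$, i.e.\ by simultaneous row and column permutation. We read the PAB Proposition as covering the joint action $\pi\cdot(X, T_{\mathcal{C}\to\mathcal{C}}) = ((x_{\pi(j)}), (T_{\pi(l)\to\pi(k)}))$. This action is exactly the one induced by permuting the views, so passing $T_{\mathcal{C}\to\mathcal{C}}$ unchanged to every layer is consistent with the induction.

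Second, we show that $\mathrm{PMA}(Y) = \mathrm{MAB}(S, Y)$ is invariant under permuting the rows of $Y$. With queries $Q = SW_Q$, keys $K = YW_K$ and values $V = YW_V$, permuting $Y$ by a permutation matrix $P$ gives $K \mapsto PK$ and $V \mapsto PV$. Then
\[
\mathrm{softmax}\bigl(Q(PK)^\top\bigr)PV = \mathrm{softmax}\bigl(QK^\top\bigr)P^\top P V = \mathrm{softmax}\bigl(QK^\top\bigr)V .
\]
The first equality holds because the row-wise softmax commutes with column permutations. The final equality uses $P^\top P = I$. The residual and feed-forward parts of the MAB act on the seed side, which is fixed, so the whole pooling output is unchanged.

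\textbf{Main obstacle.} If the PMA also carries a pose-dependent attention bias or pose-augmented keys, we must check that these permute consistently with the key rows. A bias of the form $b(T_{j\to\cdot})$ aggregated symmetrically over the second index would do so. In that case the same $P^\top P$ cancellation applies. Finally, $H$ acts on the invariant pooled vector, so $\mathrm{Enc}(\pi\cdot X) = H(\mathrm{PMA}(F(\pi\cdot X))) = H(\mathrm{PMA}(\pi\cdot F(X))) = \mathrm{Enc}(X)$. This establishes the statement.
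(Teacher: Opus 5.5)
Your proposal is correct and takes essentially the same approach as the paper. In both, the stacked PAB blocks are permutation equivariant by composition (the paper phrases this as an induction on $H_\ell$), and the seed-based PMA pooling then converts equivariance into invariance; your extra details (the direct $P^\top P$ cancellation for PMA, the row-wise input embedding and output head, and reading PAB equivariance as the joint action on $(X, T_{\mathcal{C}\to\mathcal{C}})$, which matches the paper's assumption $P(\pi\cdot X)=\pi\cdot P(X)$) are consistent elaborations of the same argument.
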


\begin{proof}
The proof is provided in Appendix.
\end{proof}

\subsection{Language-Compatible 3D Features}

For downstream task such as open-vocabulary segmentation with a CLIP VLM for instance, the inferred 3D features must remain compatible with text embeddings. So, we map it back to the original foundation-model embedding space:
\begin{equation}
e_i = h_\psi(z_i).
\end{equation}

We therefore add a text-prototype likelihood. Given a text prototype \(s_c\), we define:
\begin{equation}
\label{eq:text_prototype}
p_\psi(c_i \mid z_i)
=
\frac{
\exp
\left(
\cos(e_i,s_c)/\tau
\right)
}{
\sum_{c'=1}^{K}
\exp
\left(
\cos(e_i,s_{c'})/\tau
\right)
}.
\end{equation}

The text loss is:
\begin{equation}
\mathcal{L}_{\mathrm{text}}(i)
=
-
\log p_\psi(c_i \mid z_i).
\end{equation}

This term does not define the latent state by itself. Instead, it aligns the inferred physical representation with the language-aligned space of the pretrained model. The latent state is still primarily learned through cross-view measurement prediction.
At inference time, we use: \(e_i = h_\psi(\mu_i)\)

\subsection{Weak Anchoring to the Original Feature Space}

We also use the average lifted feature as a weak anchor:
\(a_i =
\sum_{j=1}^{M_i} w_{ij} x_{ij}.\)

The alignment loss is: \(\mathcal{L}_{\mathrm{avg}}(i)
=
1 -
\cos(e_i,a_i).\)

This term should not be interpreted as assuming that the average feature is the correct 3D representation. Rather, it keeps the inferred feature from drifting too far from the original pretrained feature space. The average feature acts as a weak measurement, while the latent state is learned through posterior-predictive consistency.

\subsection{Full Objective}

The full training loss is:
\begin{equation}
\mathcal{L}(i)
=
\mathcal{L}_{\mathrm{view}}(i)
+
\beta\,
\mathrm{KL}
\left[
q_\phi(z_i \mid \mathcal{C}_i)
\,\middle\|\,
p(z_i)
\right]
+
\lambda_{\mathrm{text}}\,
\mathcal{L}_{\mathrm{text}}(i)
+
\lambda_{\mathrm{avg}}\,
\mathcal{L}_{\mathrm{avg}}(i).
\end{equation}

The role of each term is distinct. The view prediction term learns the latent state by asking it to explain unseen measurements. The KL term models uncertainty and regularizes the unobserved latent space. The text term preserves semantic compatibility with language. The average-alignment term softly anchors the output to the original feature space.

Together, these terms allow \MyTitle\ to learn a 3D feature field without ever observing ground-truth 3D features.

\subsection{Inference}

At test time, all available measurements are used as context. The encoder estimates:
\begin{equation}
(\mu_i,\sigma_i)
=
\mathrm{Encoder}_\phi(\mathcal{C}_i).
\end{equation}

The final 3D feature is:
\begin{equation}
\label{eq:posterior}
e_i = h_\psi(\mu_i).
\end{equation}

For open-vocabulary segmentation, each point is assigned to the closest text prototype:
\begin{equation}
\hat{c}_i
=
\arg\max_c
\cos(e_i,s_c).
\end{equation}

The posterior scale \(\sigma_i\) provides an uncertainty estimate over the inferred physical state. This uncertainty is especially useful in regions with sparse views, inconsistent observations, object boundaries, or strong occlusions.

\section{Experiments}
\label{sec:experiments}

We evaluate \MyTitle\ on \emph{annotation-free open-vocabulary 3D semantic segmentation}. 
Our goal is to test whether the posterior-inferred 3D semantic field learned by \MyTitle\ is more semantically coherent than direct deterministic 2D-to-3D feature lifting.
Given a reconstructed point cloud, we reproject each 3D point into multiple images, collect the corresponding 2D semantic features and relative poses, and infer a latent 3D embedding using the posterior in Eq.~\ref{eq:posterior}. 
At test time, each 3D point is classified by similarity between its inferred embedding and text prototypes. 
This setup directly measures whether probabilistic multi-view semantic inference improves 3D semantic organization beyond simple aggregation.

\subsection{Experimental Setup}

\paragraph{Datasets.}
We evaluate on three standard indoor 3D benchmarks. 
\textbf{ScanNet}~\cite{dai2017scannet} contains diverse RGB-D indoor scenes with \(1{\,}201\) training and \(312\) validation scans, together with depth, camera poses, and intrinsics that enable multi-view 2D--3D lifting. 
Following common practice, we map the ScanNet label \textit{other furniture} to \textit{other}. 
We also evaluate on \textbf{ScanNet200}, which extends ScanNet to 200 semantic categories and substantially increases vocabulary size and long-tail difficulty. 
Finally, we report results on \textbf{Matterport3D}~\cite{chang2017matterport3d}, a large-scale RGB-D dataset with complex building interiors and denser scene coverage. 
Together, these datasets test both standard indoor open-vocabulary segmentation and robustness under larger semantic vocabularies.

\paragraph{Task and evaluation protocol.}
For each 3D point, we gather all valid reprojected 2D observations, infer a latent 3D semantic embedding, and classify the point by similarity to text embeddings. 
We report \textbf{mean Intersection-over-Union (mIoU)} and \textbf{mean Accuracy (mAcc)}, the standard metrics for open-vocabulary 3D segmentation. 
Unless otherwise specified, all text prototypes are built from the same prompt set across methods.

\paragraph{Implementation details.}
We retain only geometrically valid reprojections satisfying
\(
|z - d(u,v)| < \delta\, d(u,v)
\),
where \(z\) is the reprojected depth and \(d(u,v)\) is the observed pixel depth; we use \(\delta = 0.05\). 
We train with learning rate \(5 \times 10^{-5}\), batch size \(512\), cosine decay, and \(5{\,}000\) warmup steps. Training is done on 1 NVIDIA RTX A6000 GPU. 
The variational objective uses KL annealing for \(100{\,}000\) steps, with \(\beta = 10^{-4}\), \(\lambda_{\text{text}}=0.1\), and \(\lambda_{\text{avg}}=0.05\). 
Each epoch contains \(10{\,}000\) steps and training runs for \(50\) epochs. 
We use up to \(40\) observations per point, with at most \(20\) context views and \(20\) held-out views; missing observations are handled by padding and masking. 
For text prototype matching in Eq.~\ref{eq:text_prototype}, we use temperature \(0.05\). 
Unless otherwise noted, our default 2D semantic teacher is TIPSv2~\cite{cao2026tipsv2}. 
Additional implementation details are provided in the Appendix.

\subsection{Main Results: Annotation-Free Open-Vocabulary 3D Segmentation}

Table~\ref{tab:main_annotation_free_3d_segmentation} compares \MyTitle\ against three families of methods: 
(i) zero-shot 2D feature fusion methods, 
(ii) 2D fusion methods with training, and 
(iii) methods that additionally train a 3D backbone.
Our method consistently improves over prior annotation-free approaches across all datasets. 
Notably, \MyTitle\ substantially outperforms direct zero-shot lifting methods, showing that strong 2D semantic features alone are not sufficient to obtain a stable 3D semantic field. 
This supports our central hypothesis: 3D semantics should be inferred as a latent variable from noisy multi-view evidence rather than obtained by direct feature projection or voting.

A second important observation is that \MyTitle\ is already highly competitive \emph{without} 3D backbone distillation. 
This isolates the gain from probabilistic multi-view semantic inference itself. 
When combined with optional 3D distillation, performance further improves on all datasets, increasing the margin over previous 3D-training methods and narrowing the gap to full supervision. 
These results indicate that \MyTitle\ provides a strong 3D semantic representation on its own, while remaining complementary to downstream 3D backbone training. Further qualitative results are given in Appendix, Figures~\ref{fig:gt_pred_qualitative_1} and \ref{fig:gt_pred_qualitative_2}.

\begin{table*}[t]
\centering
\small
\setlength{\tabcolsep}{5.2pt}
\renewcommand{\arraystretch}{1.13}
\begin{tabular}{l c cc cc cc}
\toprule
\multirow{2}{*}{\textbf{Method}}
&
\multirow{2}{*}{\makecell[c]{\textbf{Test-Time}\\\textbf{Images}}}
&
\multicolumn{2}{c}{\textbf{ScanNet}}
&
\multicolumn{2}{c}{\textbf{ScanNet200}}
&
\multicolumn{2}{c}{\textbf{Matterport3D}}
\\
\cmidrule(lr){3-4}
\cmidrule(lr){5-6}
\cmidrule(lr){7-8}
&
&
\textbf{mIoU} & \textbf{mAcc}
&
\textbf{mIoU} & \textbf{mAcc}
&
\textbf{mIoU} & \textbf{mAcc}
\\
\midrule

\multicolumn{8}{l}{\textit{2D fusion Zero-shot}} \\
MaskCLIP-3D~\cite{lambert2020mseg}
& \checkmark
& 9.7 & 21.6
& -- & --
& -- & -- \\

MSeg Voting~\cite{lambert2020mseg}
& \checkmark
& 45.6 & 54.4
& -- & --
& 33.4 & 39.0 \\

OpenScene-2D~\cite{peng2023openscene}
& \checkmark
& 50.0 & 62.7
& -- & --
& 32.3 & 40.0 \\

CLIP-FO3D, feature projection~\cite{zhang2023clip}
& \checkmark
& 27.6 & 47.7
& -- & --
& -- & -- \\

\midrule
\multicolumn{8}{l}{\textit{2D fusion w/ training}} \\
\rowcolor{gray!12}
\textbf{Ours}
& \checkmark
& \textbf{64.6} & \textbf{75.5}
& \textbf{11.0} & \textbf{18.1}
& \textbf{50.4} & \textbf{65.1} \\

\midrule
\multicolumn{8}{l}{\textit{3D point-cloud training}} \\
OpenScene + 3D Distillation~\cite{peng2023openscene}
& \xmark
& 52.9 & 63.2
& 7.3 & --
& 41.9 & 51.2 \\

PLA~\cite{ding2023pla}
& \xmark
& -- & --
& 1.8 & --
& -- & -- \\

RegionPLC~\cite{yang2024regionplc}
& \xmark
& -- & --
& 6.5 & --
& -- & -- \\

OV3D~\cite{jiang2024open}
& \xmark
& 57.3 & 72.9
& 8.7 & --
& 45.8 & 62.4 \\

CLIP-FO3D~\cite{zhang2023clip}
& \xmark
& 30.2 & 49.1
& -- & --
& -- & -- \\

PGOV3D~\cite{zhang2025pgov3d}
& \xmark
& 59.5 & 73.2
& 9.3 & 17.1
& -- & -- \\

\rowcolor{gray!12}
\textbf{Ours + 3D Distillation}
& \xmark
& \textbf{65.8} & \textbf{76.7}
& \textbf{14.2} & \textbf{22.6}
& \textbf{53.2} & \textbf{66.6} \\

\midrule
\multicolumn{8}{l}{\textit{Fully supervised upper bound}} \\
Fully supervised
& \xmark
& 72.0 & 80.7
& 23.9 & 32.9
& 55.7 & 67.4 \\

\bottomrule
\end{tabular}
\caption{
Annotation-free open-vocabulary 3D semantic segmentation on ScanNet, ScanNet200, and Matterport3D.
\MyTitle\ consistently improves over prior zero-shot and trained 2D-fusion methods, and remains competitive with methods that additionally optimize a 3D backbone.
}
\label{tab:main_annotation_free_3d_segmentation}
\end{table*}

\subsection{Ablations and Analysis}

\paragraph{Backbone-agnostic semantic inference.}
Table~\ref{tab:vlm_miou_comparison} shows that \MyTitle\ is not tied to a specific 2D semantic backbone. 
Replacing the underlying 2D foundation model consistently changes the quality of the inferred 3D field in the expected direction: stronger 2D semantic encoders yield stronger 3D representations. 
This confirms that \MyTitle\ can exploit improvements in foundation-model quality without modifying the 3D inference architecture.

\paragraph{Latent structure versus deterministic averaging.}
In Appendix, Figure~\ref{fig:tsne_avg_vs_latent_2x2} compares the semantic geometry induced by simple average aggregation and by the latent representation inferred by \MyTitle \ .Across multiple representative scenes, latent embeddings exhibit tighter class-local neighborhoods and markedly reduced inter-class overlap, while average aggregation leads to fragmented clusters and stronger semantic mixing. 

\paragraph{Robustness to sparse views.}
Figure~\ref{fig:view_robustness} 
shows that our method  remains substantially more stable as the number of available views decreases, even in the highly sparse setting with only \(4\) views. In contrast, directly averaging 2D features degrades rapidly as view coverage becomes sparse, highlighting the benefit of our inferred 3D feature field.

\paragraph{Efficiency.}
Table~\ref{tab:efficiency_analysis} reports model size and runtime statistics. 
FLOPs, inference latency, and peak GPU memory are measured for a single forward pass with batch size \(1\), \(20\) context observations, and \(20\) held-out observations; inference latency is averaged over \(100\) runs after \(10\) warmup iterations on a single GPU. 
\MyTitle\ remains lightweight, requiring only \(0.64\)G FLOPs, \(14.4\) ms inference time, and \(150\) MB peak GPU memory. 
This makes it practical as a semantic inference module and complementary to heavier 3D distillation pipelines.

\begin{table}[t]
\centering
\small
\setlength{\tabcolsep}{4.5pt}
\renewcommand{\arraystretch}{1.12}

\begin{subtable}[t]{0.48\columnwidth}
\centering
\begin{tabular}{lcc}
\toprule
\textbf{2D Feature} 
& \textbf{ScanNet mIoU} 
& \textbf{$\Delta$} \\
\midrule
MaskCLIP~\cite{zhou2022extract} 
& 60.8 
& -- \\
CLIP-DINOiser~\cite{wysoczanska2024clip} 
& 62.3 
& $+1.5$ \\
OpenSeg~\cite{ghiasi2022scaling} 
& 63.4 
& $+2.6$ \\
\rowcolor{gray!12}
TIPSv2~\cite{cao2026tipsv2} 
& \textbf{64.6} 
& \textbf{$+3.8$} \\
\bottomrule
\end{tabular}
\caption{\textbf{Performance with different 2D VLM backbones.}}
\label{tab:vlm_miou_comparison}
\end{subtable}
\hfill
\begin{subtable}[t]{0.48\columnwidth}
\centering
\begin{tabular}{lc}
\toprule
\textbf{Metric} 
& \textbf{Value} \\
\midrule
FLOPs $\downarrow$ 
& 0.64G \\
Inference Time $\downarrow$ 
& 14.4 ms \\
Training Time $\downarrow$ 
& 41.6 h \\
Parameters $\downarrow$ 
& 35.6M \\
Peak GPU Memory $\downarrow$ 
& 150 MB \\
\bottomrule
\end{tabular}
\caption{\textbf{Efficiency.}}
\label{tab:efficiency_analysis}
\end{subtable}

\vspace{3pt}
\caption{
\MyTitle\ can benefit from stronger 2D foundations models while remaining compact and efficient.
}
\label{tab:backbone_and_efficiency}
\end{table}

\begin{figure}[t]
    \centering
    \includegraphics[width=0.48\textwidth]{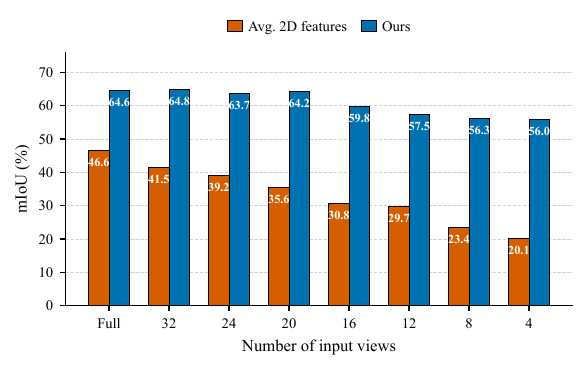}
    \caption{
    Robustness to sparse views. Compared with directly averaging 2D features across views.
    }
    \label{fig:view_robustness}
\end{figure}

\section{Conclusion}
\label{sec:conclusion}
Casper3D frames 3D semantic fusion as probabilistic inference over noisy multi-view foundation-model features. By combining set-based variational inference, relative-pose-aware reasoning, and held-out observation prediction, it learns a more stable 3D semantic representation than naive multi-view aggregation. The method remains compatible with open-vocabulary language supervision and different geometry providers, making it a simple and general approach for self-supervised 3D semantic understanding. \paragraph{Limitations and future work.}
A current limitation of Casper3D is that it relies on ground-truth 3D geometry to reproject features across 2D views during training and evaluation or assumes access to accurate scene geometry. 
A natural next step is to replace this requirement with off-the-shelf 3D estimation models, feed-forward 3D scene prediction methods, enabling Casper3D to operate in settings where ground-truth 3D geometry is unavailable.

\newpage
{
\small
\bibliographystyle{abbrv}
\bibliography{main}
}

\clearpage
\setcounter{proposition}{0}
\appendix

\section{Proofs}

\begin{lemma}[MAB is Equivariant with respect to its query and Invariant with respect to its value].
\label{lem:mab-invariant}
\end{lemma}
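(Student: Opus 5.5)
We work with the standard Set Transformer definition of the Multihead Attention Block \cite{lee2019set}. For a query matrix $Q\in\mathbb{R}^{n\times d}$ and a key/value matrix $K\in\mathbb{R}^{m\times d}$, set
\[
\mathrm{MAB}(Q,K)=\mathrm{LN}\bigl(H+\mathrm{rFF}(H)\bigr),\qquad H=\mathrm{LN}\bigl(Q+\mathrm{MHA}(Q,K,K)\bigr).
\]
Each head has the form
\[
\mathrm{Att}(QW^Q,KW^K,KW^V)=\mathrm{softmax}\!\left(QW^Q(KW^K)^\top/\sqrt{d_h}\right)KW^V,
\]
with the softmax taken row-wise. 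The heads are concatenated and projected by $W^O$. We encode a permutation $\pi$ by its permutation matrix $P_\pi$, so that $\pi\cdot X=P_\pi X$. The lemma then has two parts:
\[
\mathrm{MAB}(P_\pi Q,K)=P_\pi\,\mathrm{MAB}(Q,K),\qquad \mathrm{MAB}(Q,P_\sigma K)=\mathrm{MAB}(Q,K)
\]
for all permutations $\pi\in\mathfrak{S}_n$ and $\sigma\in\mathfrak{S}_m$.

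The plan is to prove both parts at the level of one attention head and then lift them through the rest of the block.

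\textbf{Query equivariance.} Let $A=QW^Q(KW^K)^\top$ be the score matrix. Replacing $Q$ by $P_\pi Q$ replaces $A$ by $P_\pi A$. The softmax acts independently on each row, so it commutes with row permutations: $\mathrm{softmax}(P_\pi A)=P_\pi\,\mathrm{softmax}(A)$. Multiplying on the right by $KW^V$ keeps the factor $P_\pi$ on the left, so each head is row-equivariant.

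The remaining operations also commute with left multiplication by $P_\pi$:
\begin{itemize}
\item concatenation of heads and the output projection $W^O$, which are right multiplications;
\item the residual sum with $Q$;
\item LayerNorm and the row-wise feed-forward $\mathrm{rFF}$, which act on each row separately.
\end{itemize}
A composition of row-equivariant maps is row-equivariant, which gives the first identity.

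\textbf{Key/value invariance.} Replacing $K$ by $P_\sigma K$ turns the score matrix into
\[
QW^Q W^{K\top}K^\top P_\sigma^\top = A P_\sigma^\top,
\]
so the columns of $A$ are permuted. Applying the softmax to each row is equivariant under column permutations, because the normalising sum over a row does not depend on the order of its entries. Hence
\[
\mathrm{softmax}(AP_\sigma^\top)=\mathrm{softmax}(A)\,P_\sigma^\top .
\]
The values become $P_\sigma KW^V$, and $P_\sigma^\top P_\sigma=I$ cancels the permutation. Each head is therefore unchanged. The residual term depends only on $Q$, and everything after the heads acts on the head output, so the whole block is invariant in $K$.

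\textbf{Main obstacle.} The only step that needs care is the softmax: we must check explicitly that it is row-wise, and that it is equivariant under both row and column permutations.

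If the statement is meant to cover pose-biased attention, as used later in the PAB, we would add a bias $B\in\mathbb{R}^{n\times m}$ to the scores, with entries $B_{lk}=b(T_{l\to k})$ built from relative poses. The same argument then requires $B$ to transform as $P_\pi B P_\sigma^\top$. This holds because $B$ is indexed by the same views as the queries and keys.

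Masking of padded entries is handled in the same way: the mask permutes together with the keys.
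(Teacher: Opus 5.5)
Your proof is correct, and it is the direct verification from the attention equation that the paper does not write out but delegates to the Set Transformer paper. Your query half is the same row-wise-softmax argument the paper gives for its biased-attention lemma, and your key half (column-permuted scores cancelled by $P_\sigma^\top P_\sigma=I$, with separate $\pi\in\mathfrak{S}_n$ and $\sigma\in\mathfrak{S}_m$) supplies what the paper never spells out. Your closing bias remark, in the joint form $\mathrm{MAB}_{P_\pi B P_\sigma^\top}(P_\pi Q,P_\sigma K)=P_\pi\,\mathrm{MAB}_B(Q,K)$, is also exactly what the later PAB argument needs: there the paper drops the key permutation from a pose-biased block whose bias columns are permuted with the keys, which the unbiased key-invariance of this lemma does not cover.
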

Let $\mathrm{MAB}(Q,V)$ be the Multihead Attention Block of Set Transformer,
where $Q\in\mathbb{R}^{n\times d}$ is the query input and
$V\in\mathbb{R}^{m\times d}$ is the key-value input. Let
$\pi\in\mathfrak{S}$ be a permutation. Then
\[
    \mathrm{MAB}(\pi\cdot Q,V)
    =
    \pi\cdot \mathrm{MAB}(Q,V),
\]
and
\[
    \mathrm{MAB}(Q,\pi\cdot V)
    =
    \mathrm{MAB}(Q,V).
\]

This Lemma results from the Multi-head attention equation and is proved in \cite{lee2019set}.

\begin{lemma}[Query Equivariance of MAB with Query-Permuted Bias]
\label{lem:mab-biased-equivariant}
Let $\mathrm{MAB}_{B}(Q,V)$ denote a Multihead Attention Block with additive
attention bias $B\in\mathbb{R}^{n\times m}$, where
$Q\in\mathbb{R}^{n\times d}$ is the query input and
$V\in\mathbb{R}^{m\times d}$ is the key-value input. Let
$\pi\in\mathfrak{S}_n$ be a permutation of the query indices. If the bias is
permuted in the same way as the queries, then
\[
    \mathrm{MAB}_{\pi\cdot B}(\pi\cdot Q,V)
    =
    \pi\cdot \mathrm{MAB}_{B}(Q,V).
\]
\end{lemma}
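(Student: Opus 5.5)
I would prove the lemma by writing out the biased multi-head attention explicitly and tracking how a row permutation of the queries propagates through each operation. I would fix the standard Set Transformer form
\[
\mathrm{MAB}_B(Q,V) = \mathrm{rFF}(H) + H, \qquad H = Q + \mathrm{Multihead}_B(Q,V,V),
\]
with layer normalisations optionally inserted. Each head is
\[
\mathrm{head}_h = \mathrm{softmax}\!\left( \frac{(QW^Q_h)(VW^K_h)^\top}{\sqrt{d}} + B_h \right) V W^V_h,
\]
and the outputs of the heads are concatenated and multiplied by $W^O$. Let $P_\pi\in\mathbb{R}^{n\times n}$ be the permutation matrix for $\pi$, so that $\pi\cdot Q = P_\pi Q$. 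Permuting the bias along the query indices means $\pi\cdot B = P_\pi B$. If the bias is head-dependent, the same permutation is applied to each $B_h$.

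The key steps, in order, are as follows.

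\begin{itemize}
\item First, the query projection commutes with row permutation: $(P_\pi Q)W^Q_h = P_\pi (QW^Q_h)$.
\item Second, the logit matrix transforms as
\[
P_\pi Q W^Q_h (VW^K_h)^\top/\sqrt{d} + P_\pi B_h = P_\pi\bigl(QW^Q_h(VW^K_h)^\top/\sqrt{d} + B_h\bigr).
\]
This is the only place where the hypothesis on the bias enters. An unpermuted $B$ would break the factorisation.
\item Third, the row-wise softmax commutes with row permutations: $\mathrm{softmax}(P_\pi A) = P_\pi\,\mathrm{softmax}(A)$. This holds because each row is normalised independently.
\item Fourth, right-multiplying by $VW^V_h$, concatenating heads column-wise, and multiplying by $W^O$ all act on the right. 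They therefore preserve the left factor $P_\pi$, so $\mathrm{Multihead}_{P_\pi B}(P_\pi Q,V,V) = P_\pi\,\mathrm{Multihead}_B(Q,V,V)$.
\end{itemize}

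Finally, I would handle the residual and row-wise parts. The residual $H$ gives $P_\pi Q + P_\pi(\cdot) = P_\pi H$. The feed-forward map $\mathrm{rFF}$ and any layer normalisation act independently on each row, so they commute with $P_\pi$. This yields $\mathrm{MAB}_{\pi\cdot B}(\pi\cdot Q,V) = \pi\cdot\mathrm{MAB}_B(Q,V)$.

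The main obstacle is not the algebra but pinning down the convention for the bias. The lemma must state clearly that $\pi$ acts only on the first (query) index of $B$, and that $V$ and the key index are left fixed. In the PAB, the bias is derived from relative poses $T_{l\to k}$, and the queries are injected with pose information. When this lemma is later used for the PAB proposition, $V$ is also permuted. That case would combine this lemma with the key–value invariance of Lemma~\ref{lem:mab-invariant}, provided the bias is simultaneously permuted on its columns, i.e.\ $B\mapsto P_\pi B P_\pi^\top$. Here I would only note that the column action cancels against the permuted keys inside the softmax in the same way. The proof of the present lemma itself needs only the row action.
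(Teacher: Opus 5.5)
Your proposal is correct and follows the same route as the paper's proof. Permuting the queries and the bias rows together factors the logits as $P_\pi A$; the row-wise softmax commutes with $P_\pi$; and every remaining operation (right-multiplication by $VW^V_h$, head concatenation, $W^O$, residuals, layer norm, rFF) acts on the right or row-wise, so it preserves the left factor $P_\pi$. Your closing remark is also correct: the PAB application needs the two-sided action $B\mapsto P_\pi B P_\pi^\top$, with the column permutation cancelling against the permuted keys, and this is how one should justify the step in the PAB proof where the paper invokes the unbiased Lemma~\ref{lem:mab-invariant} on a biased block.
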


\begin{proof}
It suffices to prove the result for a single attention head, since multi-head
attention is obtained by concatenating several heads and applying shared linear
maps, which preserve permutation equivariance.

For one attention head with additive bias $B$, define
\[
    \operatorname{Att}_{B}(Q,V)
    =
    \operatorname{softmax}\!\left(
        \frac{QW_Q(VW_K)^\top}{\sqrt{d}}
        +
        B
    \right)VW_V .
\]
Let
\[
    A
    :=
    \frac{QW_Q(VW_K)^\top}{\sqrt{d}}
    +
    B .
\]
Then, after permuting the query input and the bias by the same permutation
$\pi$, the attention logits become
\[
\begin{aligned}
    \frac{(\pi\cdot Q)W_Q(VW_K)^\top}{\sqrt{d}}
    +
    \pi\cdot B
    &=
    \pi\cdot
    \frac{QW_Q(VW_K)^\top}{\sqrt{d}}
    +
    \pi\cdot B \\[1mm]
    &=
    \pi\cdot
    \left(
        \frac{QW_Q(VW_K)^\top}{\sqrt{d}}
        +
        B
    \right) \\[1mm]
    &=
    \pi\cdot A .
\end{aligned}
\]
Since the softmax is applied row-wise, it commutes with row permutations:
\[
    \operatorname{softmax}(\pi\cdot A)
    =
    \pi\cdot \operatorname{softmax}(A).
\]
Therefore,
\[
\begin{aligned}
    \operatorname{Att}_{\pi\cdot B}(\pi\cdot Q,V)
    &=
    \operatorname{softmax}\!\left(
        \frac{(\pi\cdot Q)W_Q(VW_K)^\top}{\sqrt{d}}
        +
        \pi\cdot B
    \right)VW_V \\[1mm]
    &=
    \operatorname{softmax}(\pi\cdot A)VW_V \\[1mm]
    &=
    \bigl(\pi\cdot \operatorname{softmax}(A)\bigr)VW_V \\[1mm]
    &=
    \pi\cdot
    \left(
        \operatorname{softmax}(A)VW_V
    \right) \\[1mm]
    &=
    \pi\cdot \operatorname{Att}_{B}(Q,V).
\end{aligned}
\]
Thus, a single biased attention head is permutation equivariant with respect to
the query input when the bias is permuted in the same way.

The multi-head attention layer preserves this property because each head
satisfies the same equivariance relation and the output projection is applied
row-wise with shared parameters. The remaining operations in $\mathrm{MAB}$,
namely residual connections, layer normalization, and row-wise feed-forward
networks, are also applied identically to each row and therefore commute with
the action of $\pi$. Hence,
\[
    \mathrm{MAB}_{\pi\cdot B}(\pi\cdot Q,V)
    =
    \pi\cdot \mathrm{MAB}_{B}(Q,V).
\]
\end{proof}

\begin{lemma}[Permutation Invariance of PMA]
\label{lem:pma-invariant}
Let $\mathrm{PMA}$ denote pooling by multihead attention with a learned seed
matrix $S$. Then, for any permutation $\pi\in\mathfrak{S}_n$ and any
$X\in\mathbb{R}^{n\times d}$,
\[
    \mathrm{PMA}(\pi\cdot X)
    =
    \mathrm{PMA}(X).
\]
\end{lemma}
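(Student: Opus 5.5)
The plan is to write PMA in its Set Transformer form, $\mathrm{PMA}(X)=\mathrm{MAB}(S,\mathrm{rFF}(X))$, where $S\in\mathbb{R}^{k\times d}$ is the learned seed matrix used as the query input and $\mathrm{rFF}$ is a row-wise feed-forward network with shared parameters. The permutation $\pi$ acts only on the rows of $X$, so it never reaches the query slot, which holds the fixed seeds $S$. The invariance should therefore follow by composing two facts: the row-wise map is equivariant, and MAB is invariant with respect to its key-value input (the second part of Lemma~\ref{lem:mab-invariant}).

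\textbf{Step 1.} Show that $\mathrm{rFF}(\pi\cdot X)=\pi\cdot\mathrm{rFF}(X)$. The same function is applied independently to each row, so permuting the rows before or after applying it gives the same result.

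\textbf{Step 2.} Apply Lemma~\ref{lem:mab-invariant} with query $Q=S$ and value $V=\mathrm{rFF}(X)$. This gives
\[
\mathrm{PMA}(\pi\cdot X)=\mathrm{MAB}(S,\pi\cdot\mathrm{rFF}(X))=\mathrm{MAB}(S,\mathrm{rFF}(X))=\mathrm{PMA}(X).
\]

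\textbf{Step 3.} For completeness, re-derive the invariance in the value argument rather than only citing it. Permuting the rows of $V$ permutes the columns of the logits $SW_Q(VW_K)^\top$ and the rows of $VW_V$ by the same $\pi$. Writing $P_\pi$ for the permutation matrix, the output becomes $\operatorname{softmax}(AP_\pi^\top)P_\pi VW_V$. Since softmax is taken row-wise, it commutes with column permutations, so this equals $\operatorname{softmax}(A)P_\pi^\top P_\pi VW_V=\operatorname{softmax}(A)VW_V$, using $P_\pi^\top P_\pi=I$. The remaining components of MAB (residual connection on $S$, layer norm, row-wise feed-forward) take as input only the attention output and $S$, neither of which depends on $\pi$, so invariance is preserved. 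The multi-head case follows head by head.

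\textbf{Main obstacle.} The step needing most care is checking that nothing in this block adds a bias or mask tied to the ordering of $X$. If the PMA in the encoder also carried a pose-dependent additive bias, as PAB does, that bias would have to be column-permuted together with $V$ for the argument to go through. The lemma as stated uses the plain Set Transformer PMA, which has no such bias, so I would state that assumption explicitly. Padding masks must also be permuted together with $X$; this is a similar column permutation and is handled by the same identity.
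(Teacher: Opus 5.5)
Your proof is correct and follows the paper's proof: row-wise equivariance of $\mathrm{rFF}$ gives $\mathrm{rFF}(\pi\cdot X)=\pi\cdot\mathrm{rFF}(X)$, and the value-invariance of $\mathrm{MAB}$ from Lemma~\ref{lem:mab-invariant} (with the fixed seeds $S$ as queries) then yields $\mathrm{PMA}(\pi\cdot X)=\mathrm{MAB}(S,\pi\cdot\mathrm{rFF}(X))=\mathrm{PMA}(X)$. Your Step 3 only re-derives that lemma through the identity $\operatorname{softmax}(AP_\pi^\top)=\operatorname{softmax}(A)P_\pi^\top$, whereas the paper cites it from the Set Transformer paper.
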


\begin{proof}
Since $\mathrm{rFF}$ is applied row-wise with shared parameters, it is
permutation equivariant:
\[
    \mathrm{rFF}(\pi\cdot X)
    =
    \pi\cdot \mathrm{rFF}(X).
\]
Therefore,
\[
\begin{aligned}
    \mathrm{PMA}(\pi\cdot X)
    &= \mathrm{MAB}\bigl(S, \mathrm{rFF}(\pi\cdot X)\bigr) \\
    &= \mathrm{MAB}\bigl(S, \pi\cdot \mathrm{rFF}(X)\bigr) \\
    &= \mathrm{MAB}\bigl(S, \mathrm{rFF}(X)\bigr) \\
    &= \mathrm{PMA}(X).
\end{aligned}
\]
The third equality follows from Lemma 1.
\end{proof}

\begin{proposition}[Permutation Equivariance of the PAB Block]

\end{proposition}

\begin{proof}
Let $X\in\mathbb{R}^{N\times d}$ denote the set of multi-view features, and let
\[
    P(X) := \mathcal{T}_{\mathcal{C}\to\mathcal{C}}
    \in \mathbb{R}^{N\times N\times 9}
\]
denote the corresponding pairwise relative-pose tensor. We assume that the pose
tensor is permutation equivariant, i.e., for every permutation
$\pi\in\mathfrak{S}_N$,
\[
    P(\pi\cdot X)=\pi\cdot P(X),
\]
where the action of $\pi$ on $P(X)$ permutes the view indices consistently.

Define the intermediate pose-aware representation
\[
    \Phi(X)
    :=
    \mathrm{rFF}\!\left(
        \operatorname{Concat}\!\left(
            X,\mathrm{PMA}\!\left(P(X)\right)
        \right)
    \right).
\]
We first show that $\Phi$ is permutation equivariant. Using the equivariance of
$P$, Lemma~\ref{lem:pma-invariant}, and the row-wise equivariance of
$\mathrm{rFF}$, we obtain
\[
\begin{aligned}
    \Phi(\pi\cdot X)
    &=
    \mathrm{rFF}\!\left(
        \operatorname{Concat}\!\left(
            \pi\cdot X,
            \mathrm{PMA}\!\left(P(\pi\cdot X)\right)
        \right)
    \right) \\[1mm]
    &=
    \mathrm{rFF}\!\left(
        \operatorname{Concat}\!\left(
            \pi\cdot X,
            \mathrm{PMA}\!\left(\pi\cdot P(X)\right)
        \right)
    \right)
    && \text{by } P(\pi\cdot X)=\pi\cdot P(X) \\[1mm]
    &=
    \mathrm{rFF}\!\left(
        \operatorname{Concat}\!\left(
            \pi\cdot X,
            \mathrm{PMA}\!\left(P(X)\right)
        \right)
    \right)
    && \text{by Lemma~\ref{lem:pma-invariant}} \\[1mm]
    &=
    \pi\cdot
    \mathrm{rFF}\!\left(
        \operatorname{Concat}\!\left(
            X,
            \mathrm{PMA}\!\left(P(X)\right)
        \right)
    \right)
    && \text{by row-wise equivariance of $\mathrm{rFF}$} \\[1mm]
    &=
    \pi\cdot \Phi(X).
\end{aligned}
\]

By definition of the PAB block, shown in Figure~\ref{fig:pab},
\[
    \mathrm{PAB}(X)
    =
    \mathrm{MAB}_{P(X)}\!\left(\Phi(X),\Phi(X)\right),
\]
where $P(X)$ is used as the additive pose-dependent attention bias.

Therefore,
\[
\begin{aligned}
    \mathrm{PAB}(\pi\cdot X)
    &=
    \mathrm{MAB}_{P(\pi\cdot X)}
    \!\left(\Phi(\pi\cdot X),\Phi(\pi\cdot X)\right) \\[1mm]
    &=
    \mathrm{MAB}_{\pi\cdot P(X)}
    \!\left(\pi\cdot\Phi(X),\pi\cdot\Phi(X)\right)
    && \text{by } P(\pi\cdot X)=\pi\cdot P(X)
       \text{ and } \Phi(\pi\cdot X)=\pi\cdot\Phi(X) \\[1mm]
        &=
    \mathrm{MAB}_{\pi\cdot P(X)}
    \!\left(\pi\cdot\Phi(X),\Phi(X)\right)
    && \text{by Lemma~\ref{lem:mab-invariant}}  \\[1mm]
    &=
    \pi\cdot
    \mathrm{MAB}_{P(X)}
    \!\left(\Phi(X),\Phi(X)\right)
    && \text{by Lemma~\ref{lem:mab-biased-equivariant}} \\[1mm]
    &=
    \pi\cdot \mathrm{PAB}(X).
\end{aligned}
\]
Hence, $\mathrm{PAB}$ is permutation equivariant.
\end{proof}

\begin{proposition}[Permutation Invariance of the Encoder]
\label{prop:pab-encoder-invariance}
Let $\mathrm{Enc}$ be the encoder as described in Figure~\ref{fig:encoder} obtained by composing permutation-equivariant
PAB blocks, followed by a permutation-invariant PMA layer. Then, for any
permutation $\pi\in\mathfrak{S}_N$ and any input
$X\in\mathbb{R}^{N\times d}$,
\[
    \mathrm{Enc}(\pi\cdot X)
    =
    \mathrm{Enc}(X).
\]
\end{proposition}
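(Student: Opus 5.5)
The proof writes $\mathrm{Enc}$ as a composition and pushes the permutation through it, one stage at a time. By the description in Figure~\ref{fig:encoder}, we take
\[
    \mathrm{Enc} = H \circ \mathrm{PMA} \circ \mathrm{PAB}_L \circ \cdots \circ \mathrm{PAB}_1 ,
\]
where each $\mathrm{PAB}_\ell$ is a Pose-aware Attention Block and $H$ is the final head producing $(\mu,\sigma)$. If there is any initial row-wise embedding of the input features, it is a shared $\mathrm{rFF}$ and can be absorbed into $\mathrm{PAB}_1$, since a row-wise map with shared parameters is equivariant. The permutation $\pi$ acts on the view index. Its action on the relative-pose tensor $T_{\mathcal{C}\to\mathcal{C}}$ is the simultaneous permutation of both indices, exactly as assumed in the proof of the PAB proposition.

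\textbf{Step 1: the PAB stack is equivariant.} We prove by induction on $L$ that $F_L := \mathrm{PAB}_L\circ\cdots\circ\mathrm{PAB}_1$ satisfies $F_L(\pi\cdot X)=\pi\cdot F_L(X)$.
\begin{itemize}
    \item The base case is the PAB proposition.
    \item For the inductive step, $F_{L}(\pi\cdot X)=\mathrm{PAB}_L(\pi\cdot F_{L-1}(X))=\pi\cdot \mathrm{PAB}_L(F_{L-1}(X))$, using the hypothesis and then the PAB proposition again.
\end{itemize}
The point to check is that every block receives the pose tensor of the \emph{original} views, not one recomputed from hidden features. Permuting the input views therefore permutes that tensor consistently at every layer, and the hypothesis $P(\pi\cdot X)=\pi\cdot P(X)$ holds at each depth. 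We state this explicitly: the pose argument of each block is $T_{\mathcal{C}\to\mathcal{C}}$, which transforms as $\pi\cdot T_{\mathcal{C}\to\mathcal{C}}$.

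\textbf{Step 2: pooling removes the permutation.} Lemma~\ref{lem:pma-invariant} gives
\[
    \mathrm{PMA}(F_L(\pi\cdot X))=\mathrm{PMA}(\pi\cdot F_L(X))=\mathrm{PMA}(F_L(X)).
\]
At this point the representation no longer depends on the ordering.

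\textbf{Step 3: the head.} $H$ is any function of the pooled seed output, for example linear maps to $\mu$ and $\log\sigma$. Hence $H\circ\mathrm{PMA}\circ F_L$ is invariant, which is the claim.

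The main obstacle is Step 1. Composing equivariant maps is trivial; the care lies in how the pose bias is threaded through the layers and in handling padding masks. A padding mask is an additive bias of $-\infty$ on masked keys, so it is indexed by view and permutes along with the inputs. It therefore falls under Lemma~\ref{lem:mab-biased-equivariant} once we also note key-side consistency. If the bias permutes on the key index as well, invariance of the attention output under joint permutation of keys, values and bias columns follows from the same softmax argument as Lemma~\ref{lem:mab-invariant}: the row-wise sum over key indices is unchanged under reindexing. We would add this remark to close the gap.
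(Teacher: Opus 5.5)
Your proof is correct and is essentially the paper's argument: induction on the stack $\mathrm{PAB}_L\circ\cdots\circ\mathrm{PAB}_1$ gives equivariance, and Lemma~\ref{lem:pma-invariant} then removes the permutation at the pooling stage. Your additions are harmless refinements of points the paper leaves implicit: the head $H$ after $\mathrm{PMA}$, passing the original pose tensor $T_{\mathcal{C}\to\mathcal{C}}$ (permuted on both view indices) into every block, and the remark that jointly permuting keys, values and bias columns leaves attention unchanged; that last remark is actually what the paper's PAB proof needs where it applies Lemma~\ref{lem:mab-invariant} to a biased attention block.
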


\begin{proof}
Let the encoder be defined as
\[
    \mathrm{Enc}(X)
    :=
    \mathrm{PMA}\!\left(
        \mathrm{PAB}_L \circ \cdots \circ \mathrm{PAB}_1(X)
    \right),
\]
where each $\mathrm{PAB}_\ell$ is permutation equivariant. Define
\[
    H_0 := X,
    \qquad
    H_\ell := \mathrm{PAB}_\ell(H_{\ell-1}),
    \qquad
    \ell=1,\dots,L.
\]
Since each $\mathrm{PAB}_\ell$ is permutation equivariant, we have, by induction,
\[
    H_\ell(\pi\cdot X)
    =
    \pi\cdot H_\ell(X),
    \qquad
    \ell=1,\dots,L.
\]
In particular,
\[
    H_L(\pi\cdot X)
    =
    \pi\cdot H_L(X).
\]
Finally, since $\mathrm{PMA}$ is permutation invariant, we obtain
\[
\begin{aligned}
    \mathrm{Enc}(\pi\cdot X)
    &=
    \mathrm{PMA}\!\left(H_L(\pi\cdot X)\right) \\
    &=
    \mathrm{PMA}\!\left(\pi\cdot H_L(X)\right) \\
    &=
    \mathrm{PMA}\!\left(H_L(X)\right) \\
    &=
    \mathrm{Enc}(X).
\end{aligned}
\]
Therefore, the PAB encoder is permutation invariant.
\end{proof}

\section{Implementation Details}
\label{app:implementation_details}

\paragraph{2D semantic features.}
Our default 2D semantic teacher is OpenSeg. 
For each valid 2D reprojection of a 3D point, we extract a \(768\)-dimensional semantic feature and store it together with view identity and geometric metadata. 
In all main experiments reported in the paper, the Set-VAE operates on these OpenSeg features, although the architecture itself is backbone-agnostic.

\paragraph{Geometric preprocessing.}
We preprocess each scene by reprojecting 3D points into all available RGB-D views using camera intrinsics and extrinsics. 
A reprojected observation is considered valid only if it satisfies geometric and visibility constraints. 
We use a relative depth consistency rule between the reprojected depth and the observed depth map, together with a border exclusion heuristic to remove unstable image-edge projections. 
The resulting valid observations are grouped by 3D point. 
For each 3D point, we store: its 3D coordinates, the list of valid view ids, the corresponding 2D semantic features, and the associated camera extrinsics. 
We additionally precompute the average feature across valid observations for each point, which is used as a baseline and as an auxiliary alignment target during training.

\paragraph{Training samples.}
Training is performed on scene chunks. 
Each training sample is constructed by first selecting a scene, then sampling a chunk of \(2048\) 3D points from that scene. 
For each point, we sample up to \(20\) context views and up to \(20\) held-out target views. 
The fraction of observations allocated to context is capped by \texttt{max\_context\_frac}=0.6. 
When fewer observations are available, we use padding and binary masks. 
At training time, the model receives the context observations and is trained to predict the semantic embeddings of held-out observations.

\paragraph{Observation representation.}
Each context observation is represented by a \(1024\)-dimensional semantic feature. 
Pairwise relative pose features are computed between context views and are used as pairwise attention biases in the encoder. 
Held-out target views are represented by their relative pose with respect to the context views and are used by the decoder for conditional prediction.

\paragraph{Network Architecture.}
Our architecture takes as input a set of \(1024\)-dimensional semantic tokens. 
An input linear layer first maps these features to an encoder dimension of \(1024\). 
The encoder consists of \(4\) stacked self-attention blocks with \(8\) attention heads, LayerNorm, and dropout probability \(0.05\). 
Relative pose information is injected through a dedicated pose encoder that takes \(14\)-dimensional pose descriptors as input and maps them to attention-space biases using a hidden size of \(64\) and output size \(128\), with dropout \(0.05\).

After encoder processing, we reduce the set representation using a learned seed-based reducer with \(4\) seeds. 
This reduced representation is passed through \(2\) latent layers with latent dimension \(512\). 
The posterior latent variable has dimension \(512\). 
The decoder uses \(4\) attention heads, a hidden dimension of \(512\), \(2\) hidden layers, and dropout \(0.05\). 
A final projection head maps the latent representation back to CLIP-compatible semantic space using \(2\) hidden layers of size \(1024\).

\paragraph{Latent objective.}
We train the variational model with a held-out semantic reconstruction objective and a KL regularizer. 
The held-out reconstruction loss is computed in feature space using cosine similarity. 
The KL term is weighted by \(10^{-4}\) and linearly annealed over the first \(100{,}000\) optimization steps. 
We do not use free bits in the reported experiments.

\paragraph{Semantic alignment losses.}
In addition to the held-out reconstruction objective, we use two auxiliary semantic alignment terms. 
First, we align the predicted 3D semantic representation with the mean aggregated 2D semantic embedding of the same 3D point, using weight \(\lambda_{\text{avg}}=0.05\). 
Second, we align the inferred representation with text space using text prototypes, with weight \(\lambda_{\text{text}}=0.1\). 
Text similarity uses temperature \(0.05\). 
Unless otherwise noted, text prototypes are built with the LAION-style prompt templates.

\paragraph{Optimization.}
We train with Adam using learning rate \(5\times 10^{-5}\), \(\beta_1=0.9\), \(\beta_2=0.999\), and weight decay \(10^{-4}\). 
We use gradient clipping with maximum norm \(1.0\). 
The learning rate follows a warmup-cosine schedule with \(5{,}000\) warmup steps and minimum learning-rate ratio \(0.01\). 
Training runs for \(50\) epochs, with \(10{,}000\) optimization steps per epoch.

\paragraph{Batching and data loading.}
We use a batch size of \(512\) for training. 
Data loading uses \(16\) workers and prefetch factor \(4\). 
All experiments are run on a NVIDIA RTX A6000.

\paragraph{Voxelization and evaluation protocol.}
For 3D evaluation, we voxelize the point cloud using voxel size \(0.02\). 
This is used only for evaluation and comparison on standard 3D segmentation metrics. In the default evaluation setting, we use \(20\) context views per point.

\paragraph{Reproducibility.}
All architectural and optimization hyperparameters used in the main experiments are fixed in the public configuration file. 
In particular, the encoder depth, number of heads, latent dimension, number of context and target views, optimizer settings, KL annealing schedule, and alignment-loss weights are kept constant across the main ScanNet experiments.

\subsection{More results}
\begin{figure}[t]
\centering

\begin{subfigure}[t]{0.48\linewidth}
    \centering
    \includegraphics[width=\linewidth]{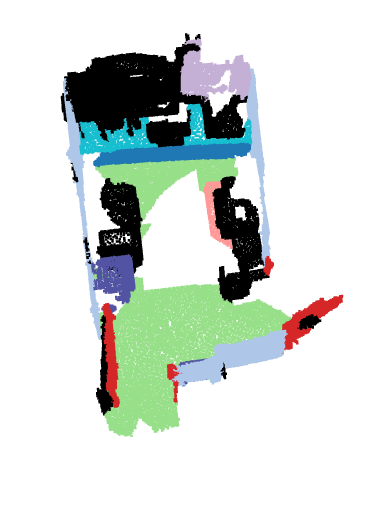}
\end{subfigure}
\hfill
\begin{subfigure}[t]{0.48\linewidth}
    \centering
    \includegraphics[width=\linewidth]{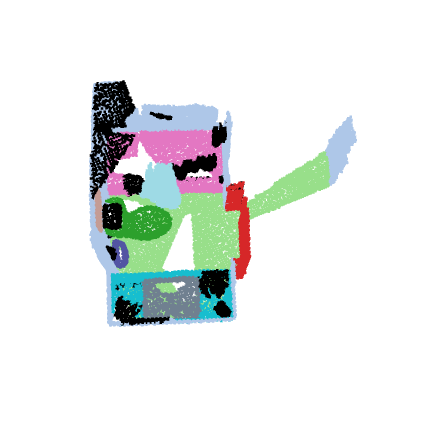}
\end{subfigure}

\vspace{4pt}

\begin{subfigure}[t]{0.48\linewidth}
    \centering
    \includegraphics[width=\linewidth]{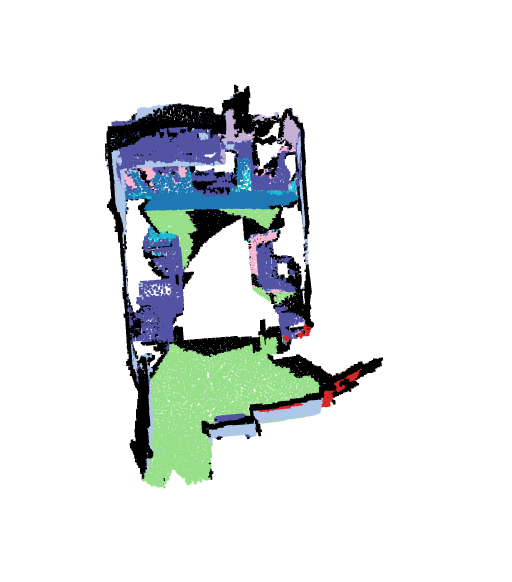}
\end{subfigure}
\hfill
\begin{subfigure}[t]{0.48\linewidth}
    \centering
    \includegraphics[width=\linewidth]{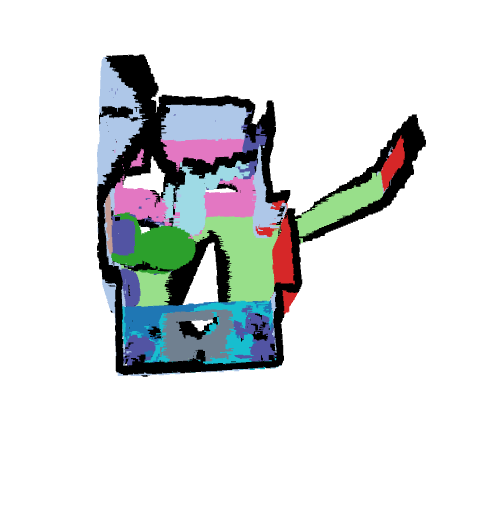}
\end{subfigure}

\caption{
Qualitative comparison between ground-truth semantic maps and Casper3D predictions.
Top: ground truth. Bottom: predicted outputs.
}
\label{fig:gt_pred_qualitative_1}
\end{figure}

\begin{figure}[t]
\centering

\begin{subfigure}[t]{0.48\linewidth}
    \centering
    \includegraphics[width=\linewidth]{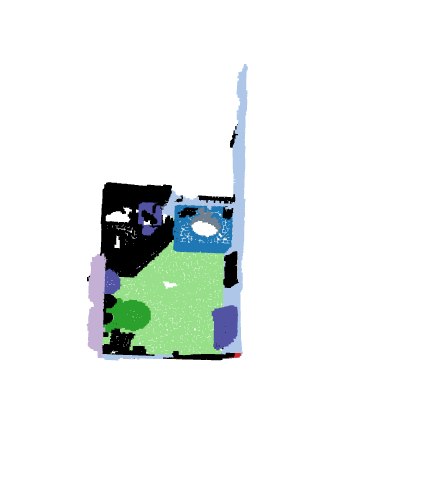}
\end{subfigure}
\hfill
\begin{subfigure}[t]{0.48\linewidth}
    \centering
    \includegraphics[width=\linewidth]{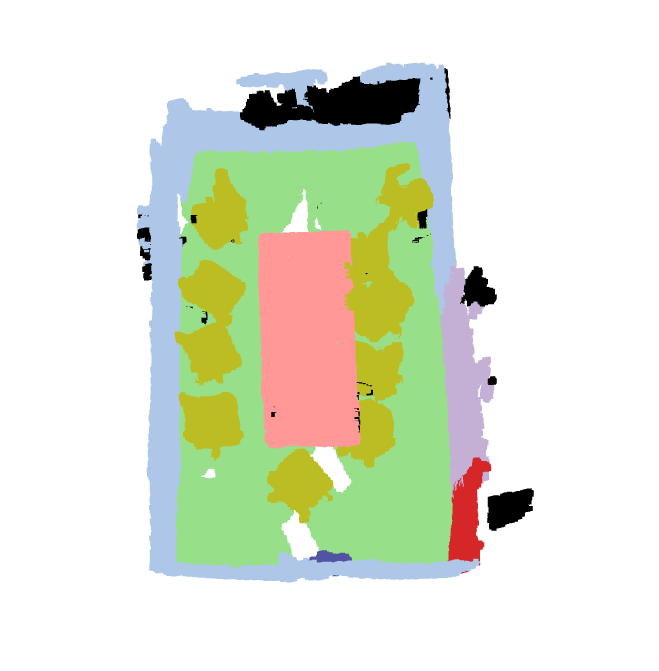}
\end{subfigure}

\vspace{4pt}

\begin{subfigure}[t]{0.48\linewidth}
    \centering
    \includegraphics[width=\linewidth]{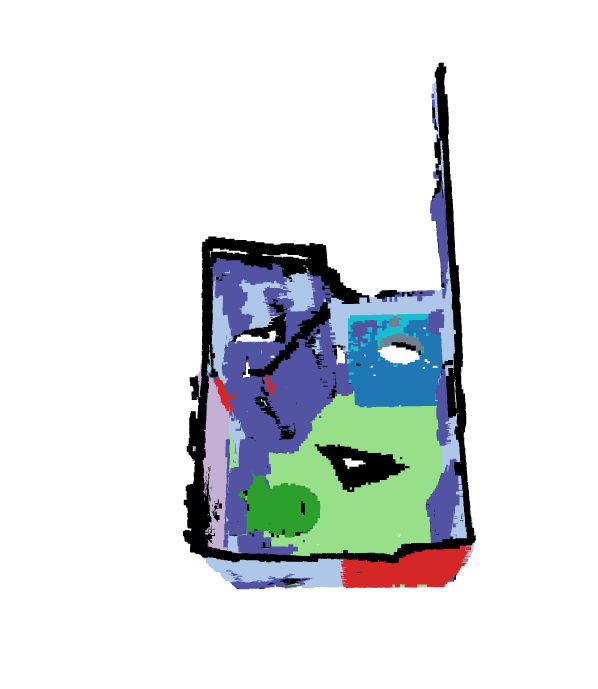}
\end{subfigure}
\hfill
\begin{subfigure}[t]{0.48\linewidth}
    \centering
    \includegraphics[width=\linewidth]{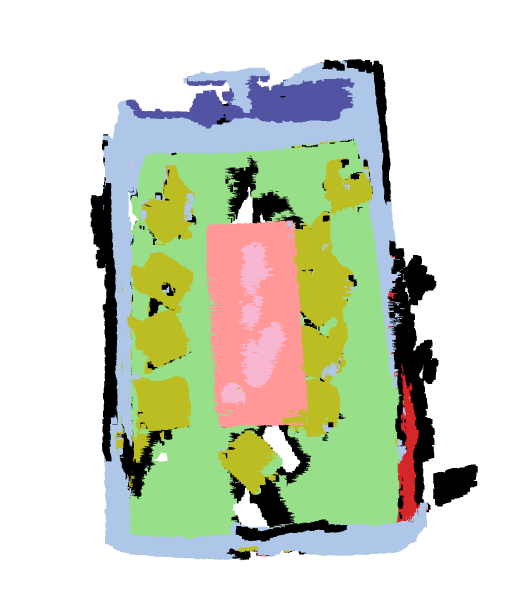}
\end{subfigure}
\caption{
Additional qualitative comparison between ground-truth semantic maps and Casper3D predictions.
Top: ground truth. Bottom: predicted outputs.
}
\label{fig:gt_pred_qualitative_2}
\end{figure}

\begin{figure}[t]
    \centering

    \begin{subfigure}{0.48\textwidth}
        \centering
        \begin{subfigure}{0.48\linewidth}
            \centering
            \includegraphics[width=\linewidth]{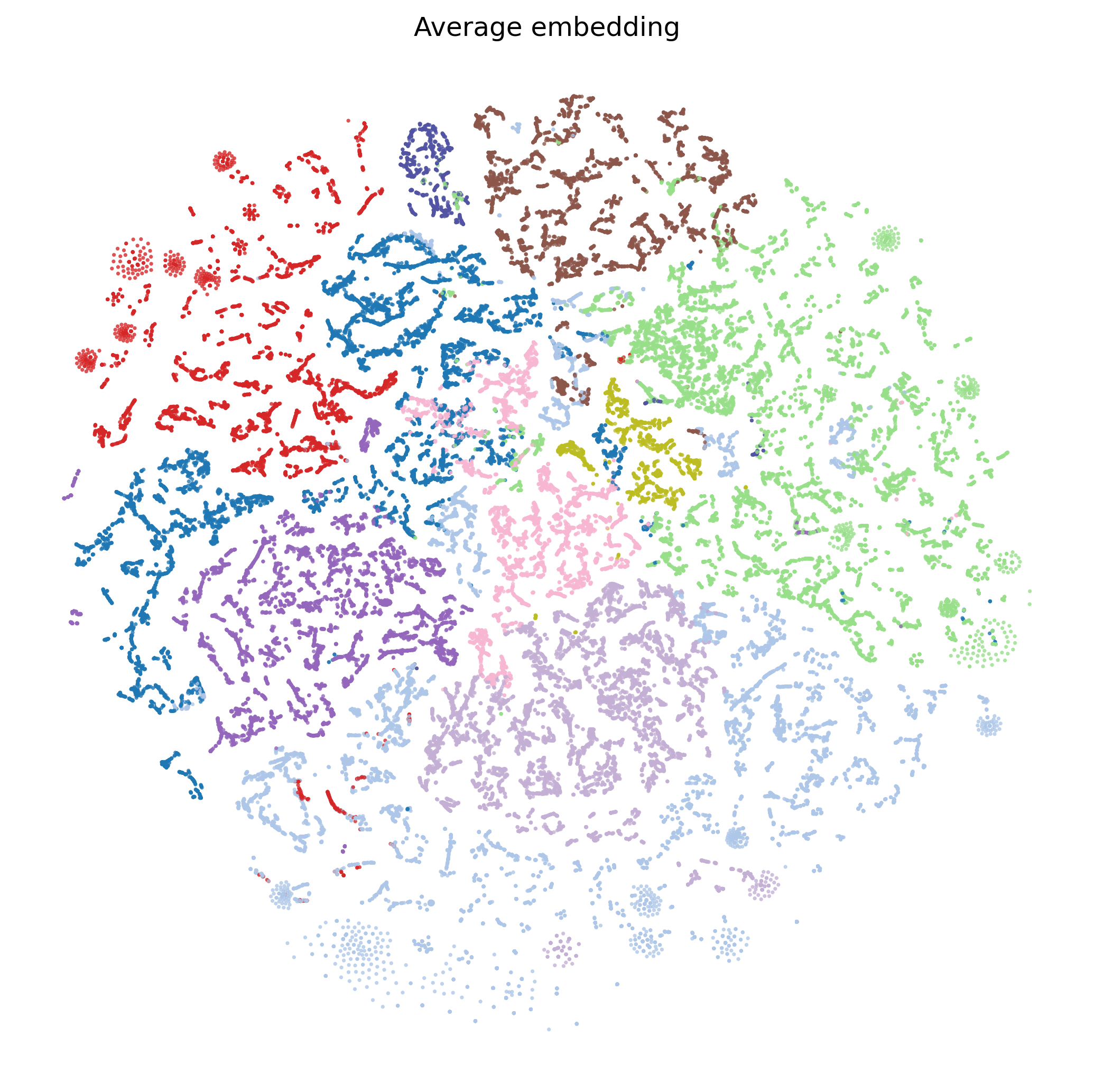}
            \caption*{Average}
        \end{subfigure}
        \hfill
        \begin{subfigure}{0.48\linewidth}
            \centering
            \includegraphics[width=\linewidth]{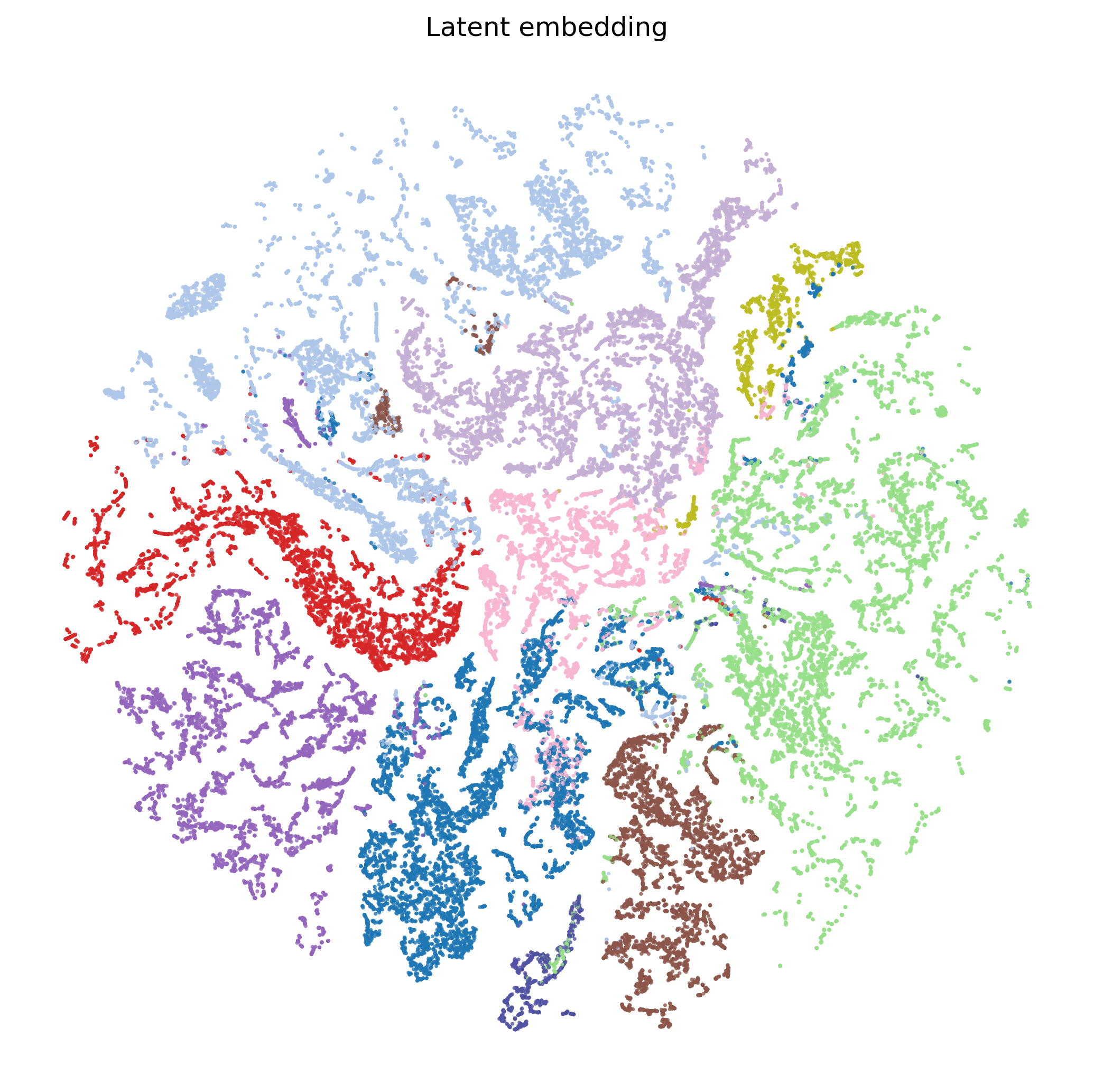}
            \caption*{Latent}
        \end{subfigure}
        \caption{Scene 1}
        \label{fig:tsne_pair4}
    \end{subfigure}
    \hfill
    \begin{subfigure}{0.48\textwidth}
        \centering
        \begin{subfigure}{0.48\linewidth}
            \centering
            \includegraphics[width=\linewidth]{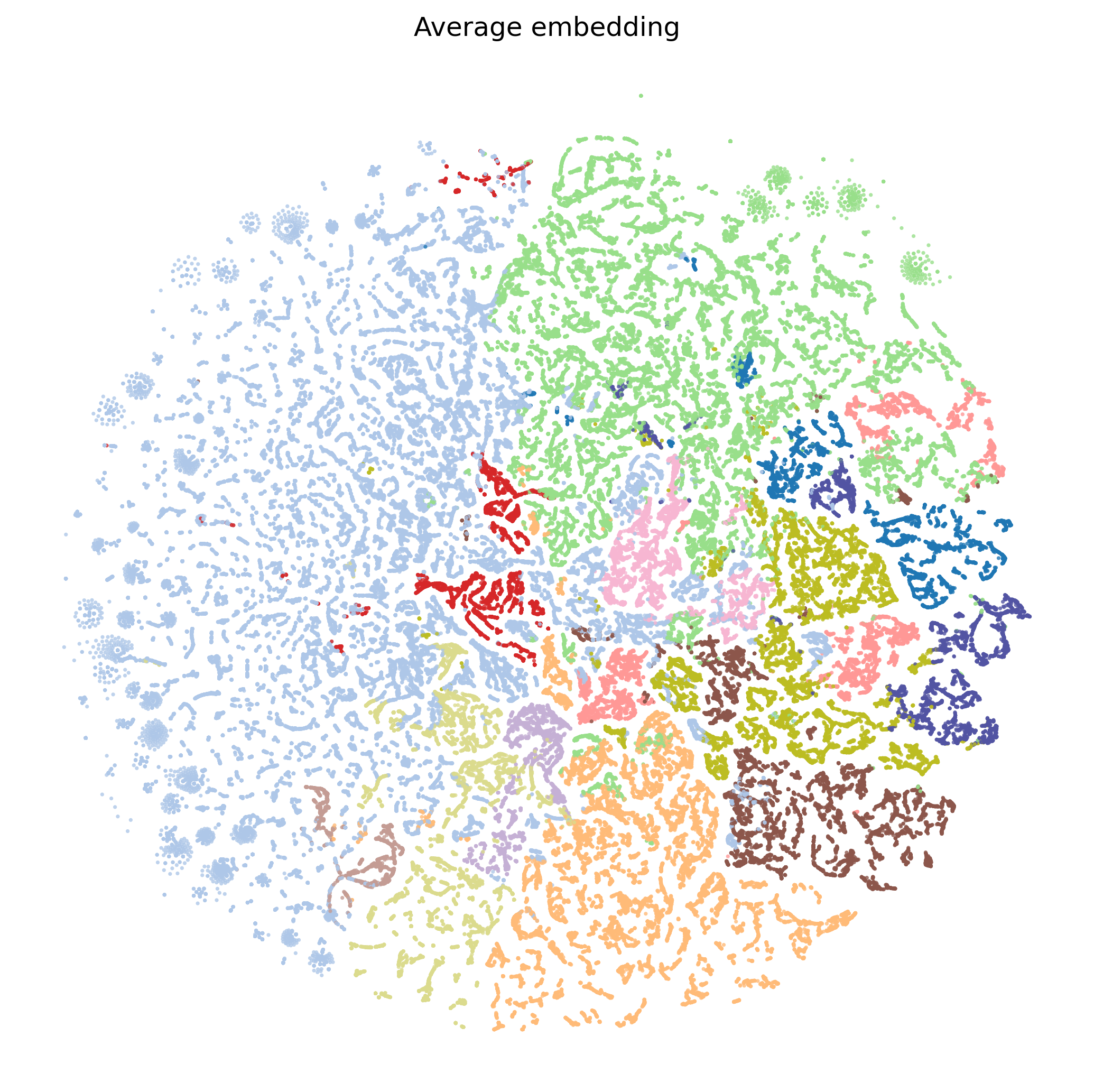}
            \caption*{Average}
        \end{subfigure}
        \hfill
        \begin{subfigure}{0.48\linewidth}
            \centering
            \includegraphics[width=\linewidth]{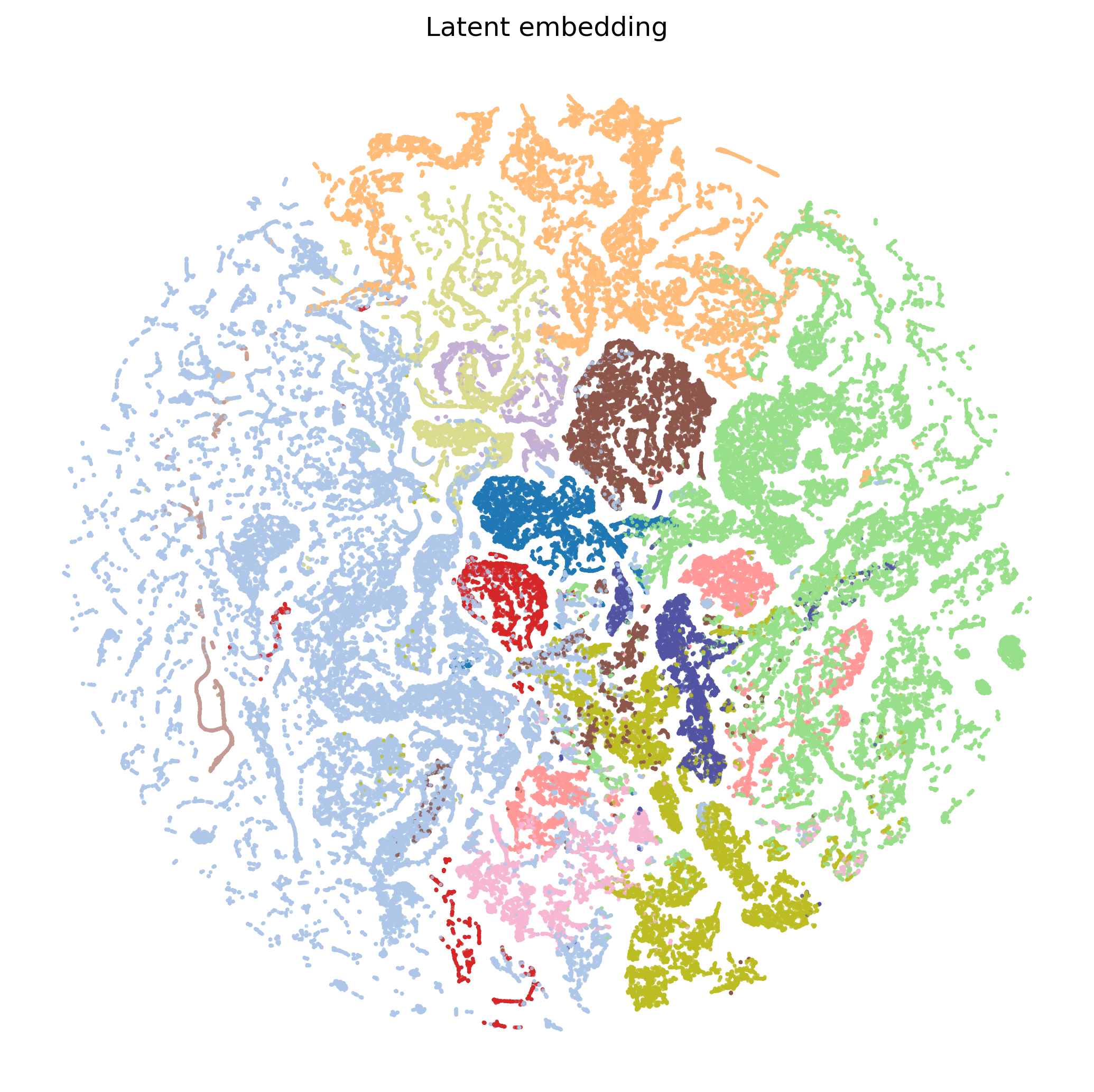}
            \caption*{Latent}
        \end{subfigure}
        \caption{Scene 2}
        \label{fig:tsne_pair6}
    \end{subfigure}

    \vspace{0.5em}

    \begin{subfigure}{0.48\textwidth}
        \centering
        \begin{subfigure}{0.48\linewidth}
            \centering
            \includegraphics[width=\linewidth]{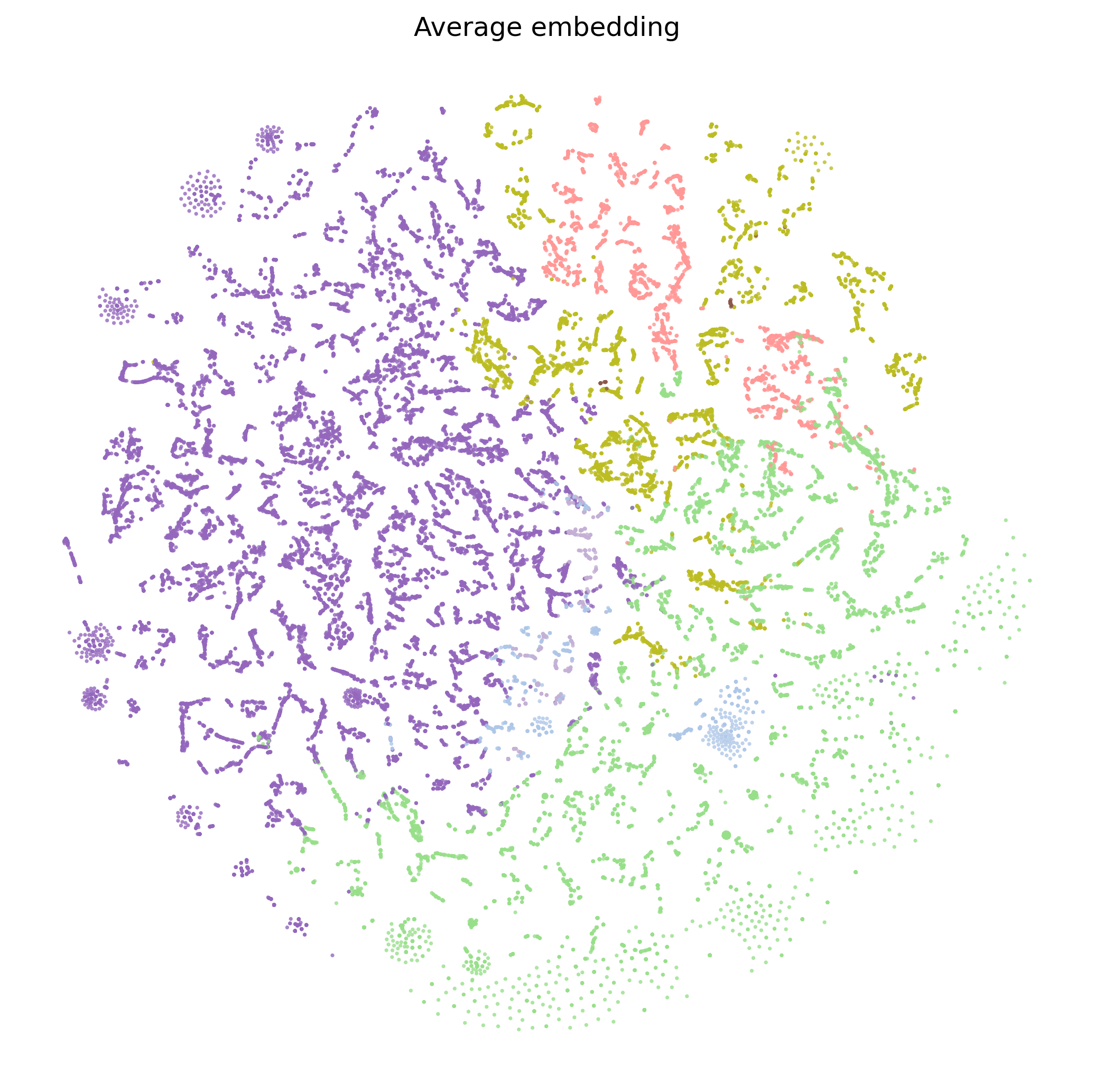}
            \caption*{Average}
        \end{subfigure}
        \hfill
        \begin{subfigure}{0.48\linewidth}
            \centering
            \includegraphics[width=\linewidth]{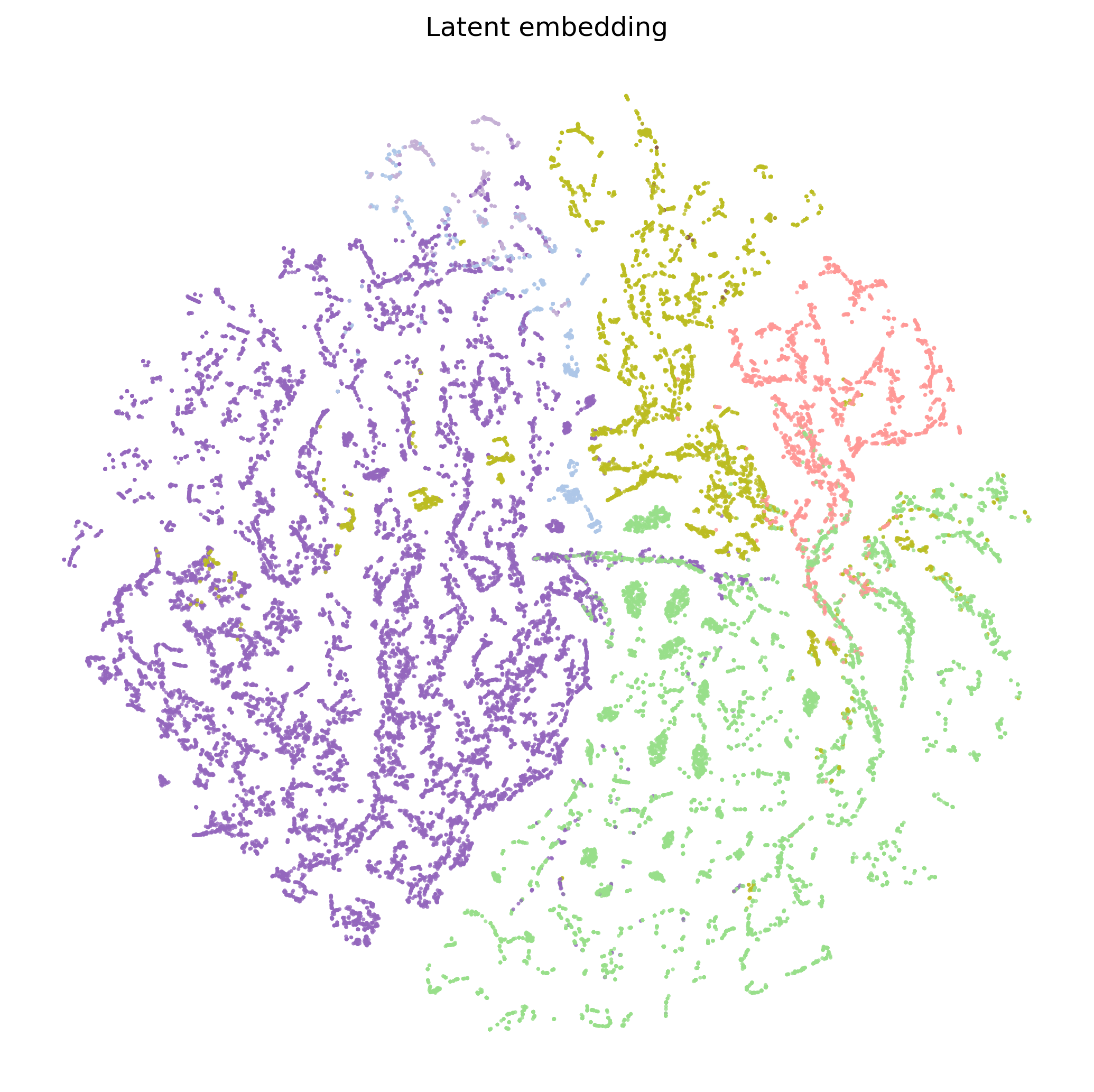}
            \caption*{Latent}
        \end{subfigure}
        \caption{Scene 3}
        \label{fig:tsne_pair1}
    \end{subfigure}
    \hfill
    \begin{subfigure}{0.48\textwidth}
        \centering
        \begin{subfigure}{0.48\linewidth}
            \centering
            \includegraphics[width=\linewidth]{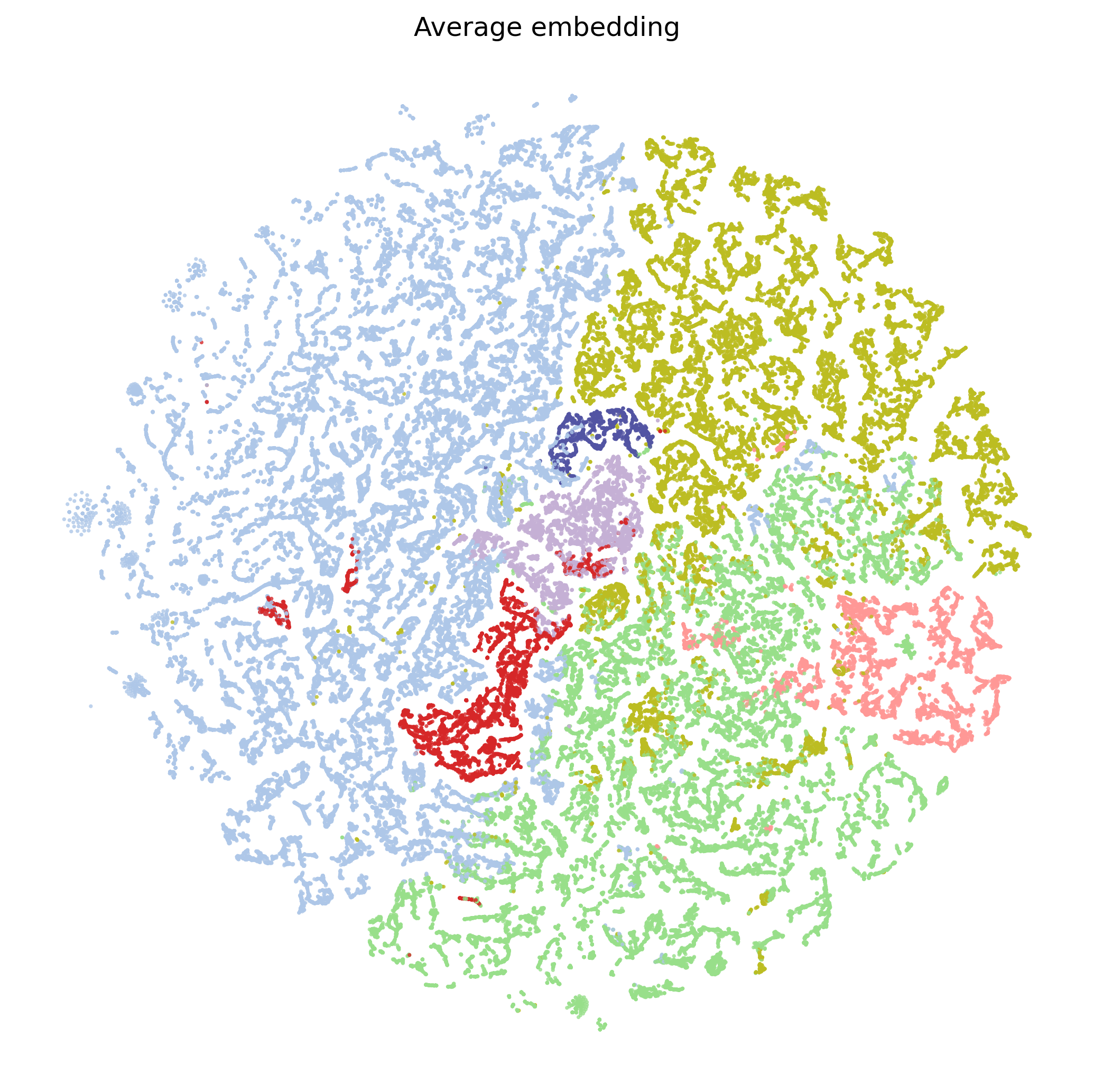}
            \caption*{Average}
        \end{subfigure}
        \hfill
        \begin{subfigure}{0.48\linewidth}
            \centering
            \includegraphics[width=\linewidth]{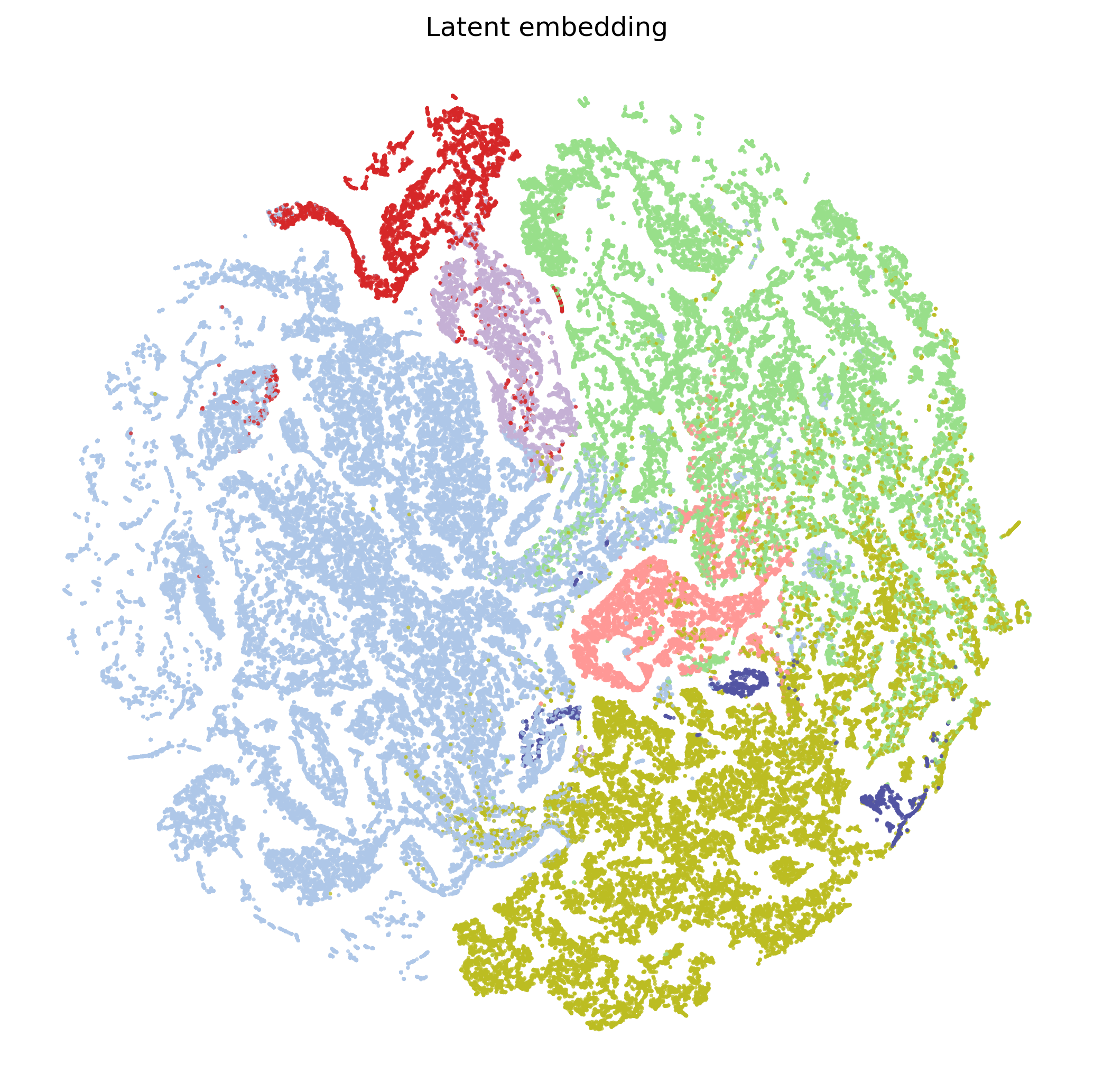}
            \caption*{Latent}
        \end{subfigure}
        \caption{Scene 4}
        \label{fig:tsne_pair8}
    \end{subfigure}

    \caption{
    t-SNE visualizations of point embeddings on representative ScanNet scenes. 
    Compared with deterministic average aggregation, the latent features inferred by \MyTitle\ form tighter same-class clusters and exhibit less inter-class mixing, indicating a more structured semantic field.
    }
    \label{fig:tsne_avg_vs_latent_2x2}
\end{figure}

\newpage

\end{document}